\newtheorem{theorem}{Theorem}
\newtheorem{definition}{Definition}
\newtheorem{lemma}{Lemma}
\newtheorem{proposition}{Proposition}
\title{Stochastic Newton and Cubic Newton Methods with Simple Local Linear-Quadratic Rates}
\author{Dmitry Kovalev \\ KAUST\thanks{King Abdullah University of Science and Technology, Thuwal, Saudi Arabia} \\ \And Konstantin Mishchenko \\ KAUST \And Peter Richt\'arik \\ KAUST}
\newcommand{\E}{\mathbb{E}}
\def\<#1,#2>{\left\langle #1,#2\right\rangle}
\newcommand{\R}{\mathbb{R}}
\newcommand{\cO}{\mathcal{O}}
\newcommand{\cW}{\mathcal{W}}
\newcommand{\cV}{\mathcal{V}}
\newcommand{\eqdef}{\stackrel{{\rm def}}{=}}
\newcommand{\sumin}{\sum_{i=1}^n}
\newcommand{\avein}{\frac{1}{n}\sum_{i=1}^n}
\newcommand{\norm}[1]{\left\| #1 \right\|}
\newcommand{\Ek}{\mathbb{E}_k}
\begin{document}

\maketitle
\begin{abstract}
We present two new remarkably simple stochastic second-order methods for minimizing the average of a very large number of sufficiently smooth and strongly convex functions. The first is a stochastic variant of Newton's method (SN), and the second is a stochastic variant of cubically regularized Newton's method (SCN). We establish local linear-quadratic convergence results. Unlike existing stochastic variants of second order methods, which require the evaluation of a large number of gradients and/or Hessians in each iteration to guarantee convergence, our methods do not have this shortcoming. For instance, the simplest variants of our methods in each iteration need to compute the gradient and Hessian of a {\em single} randomly selected function only. In contrast to most existing stochastic Newton and quasi-Newton methods, our approach guarantees local convergence faster than with first-order oracle and adapts to the problem's curvature. Interestingly, our method is not unbiased, so our theory provides new intuition for designing new stochastic methods. 
\end{abstract}

\section{Introduction}
The problem of empirical risk minimization (ERM) is ubiquitous in machine learning and data science. Advances of the last few years~\cite{roux2012stochastic, SAGA, SVRG, SDCA} have shown that availability of finite-sum structure and ability to use extra memory allow for methods that solve a wide range of ERM problems~\cite{nguyen2017sarah, nguyen2017stochastic, ryu2017proximal, allen2017katyusha, mishchenko2019stochastic, kovalev2019don}. In particular, Majorization-Minimization~\cite{lange2000optimization} (MM) approach, also known as successive upper-bound minimization~\cite{razaviyayn2016stochastic}, gained a lot of attention in first-order~\cite{mairal2013optimization, defazio2014finito, bietti2017stochastic, mokhtari2018surpassing, qian2019miso}, second-order~\cite{mokhtari2018iqn}, proximal~\cite{lin2015universal, ryu2017proximal} and other settings~\cite{karimi2018misso, mishchenko2018delay}. 

The aforementioned finite-sum structure is given by the  minimization formulation
\begin{equation}
\min \limits_{x\in \R ^{d}} \biggl[f( x)
	\eqdef \frac {1}{n}\sum \limits^{n}_{i=1}f_{i}( x)\biggr],\label{eq:pb}
\end{equation}
where each function $f_i:\R^d\to \R$ is assumed to have Lipschitz Hessian. 

Since in practical applications $n$ is often very large, this problem is typically solved using stochastic first order methods, such as Stochastic Gradient Descent (SGD) \cite{RobbinsMonro:1951}, which have cheap iterations independent of $n$.  However, constant-stepsize SGD provides fast convergence to a neighborhood of the solution only~\cite{sgd_and_hogwild, SGD-AS}, whose radius is proportional to the variance of the stochastic gradient at the optimum. So-called variance-reduced methods resolve this issue~\cite{roux2012stochastic, SVRG, sigma_k, GJS}, but their iteration complexity relies significantly on the \textit{condition number} of the problem, which is equal to the ratio of the smoothness and strong convexity parameters.

Second-order methods, in contrast, adapt to the curvature of the problem and thereby decrease  their dependence on the  condition number~\cite{NesterovBook}. Among the most well-known second-order methods are Newton's method (see e.g.~\cite{NesterovBook, karimireddy2018global, gower2019rsn}), a variant thereof with cubic regularization~\cite{nesterov2006cubic} and BFGS~\cite{liu1989limited, avriel2003nonlinear}. The vanilla deterministic Newton method  uses simple gradient update preconditioned via the inverse Hessian evaluated at the same iterate, i.e.,
\begin{align*}
	x^{k+1} = x^k - (\nabla^2 f(x^k))^{-1}\nabla f(x^k).
\end{align*}
This can be seen as approximating function $f$ locally around $x^k$ as\footnote{For $x\in \R^d$ and an $n\times n$ positive definite matrix $M$, we let $\left\|x\right\|_{M}\eqdef \left(x^\top M x\right)^{1/2}$.}
\begin{align}
	f(x)\approx f(x^k) + \<\nabla f(x^k), x - x^k> + \frac{1}{2}\left\|x-x^k\right\|^2_{\nabla^2 f(x^k)},
	\label{eq:newton_approximation}
\end{align}
and subsequently minimizing this approximation in $x$. However, this approximation is not necessarily an upper bound on $f$, which makes it hard to establish global rates for the method. Indeed, unless extra assumptions \cite{karimireddy2018global, gower2019rsn} or regularizations \cite{nesterov2006cubic, RBCN, doikov2019minimizing} are employed, the method can diverge. However, second order methods are famous for their fast (superlinear or quadratic) local convergence rates.

Unfortunately, the fast convergence rate of Newton and cubically regularized Newton methods  does not directly translate to the stochastic case. For instance, the quasi-Newton methods in~\cite{luo2016proximal, moritz2016linearly, gower2016stochastic} have complexity proportional to $\cO(\kappa^M)$ with some $M\gg 1$ instead of $\cO(\kappa)$ in~\cite{SVRG}, thus giving a worse theoretical guarantee. Other theoretical gaps include inconsistent assumptions such as bounded gradients together with strong convexity~\cite{berahas2016multi} .

Surprisingly, a lot of effort~\cite{kohler2017sub, bollapragada2018exact, zhou2018stochastic, zhang2018adaptive, tripuraneni2018stochastic, zhou2019stochastic} has been put into developing methods with massive (albeit sometimes still independent of $n$) batch sizes, typically proportional to $\cO(\epsilon^{-2})$, where $\epsilon$ is the target accuracy. This essentially removes all randomness from the analysis by making the variance infinitesimal. In fact, these batch sizes are likely to exceed $n$, so one ends up using mere deterministic algorithms. Furthermore, to make all estimates unbiased, large batch sizes are combined with preconditioning sampled gradients using the Hessians of \textit{some other} functions. We naturally expect that these techniques would not work in practice when only a single sample is used since the Hessian of a randomly sampled function might be significantly different from the Hessians of other samples.

{\em  Unlike all these methods, ours will work even with batch size equal one.} Thus, it is a major contribution of our work that we give up on taking unbiased estimates and show convergence despite the biased nature of our methods. This is achieved by developing new Lyapunov functions that are specific to second-order methods.

We are aware of two methods~\cite{rodomanov2016superlinearly, mokhtari2018iqn} that do not suffer from the issues above. Nevertheless, their update is cyclic rather than stochastic, which is known to be slower with the first-order oracle due to possibly bad choice of the iteration order~\cite{safran2019good}. 

\subsection{Basic definitions and notation}

Convergence of our algorithms will be established under certain regularity assumptions on the functions in~\eqref{eq:pb}. In particular, we will assume that all functions $f_i$ are twice differentiable, have a Lipschitz Hessian, and are strongly convex.  These concepts are defined next.   Let $\langle x,y\rangle$ and  $\|x\|\eqdef \langle x, x\rangle^{1/2}$ be the standard Euclidean inner product and norm in $ \R^d$, respectively. 

\begin{definition}[Strong convexity]
	A differentiable function $\phi: \mathbb{R}^d \rightarrow \mathbb{R}$ is $\mu$-strongly convex, where  $\mu>0$ is the strong convexity parameter, if for all $x, y \in \mathbb{R}^d$
	\begin{equation}\label{eq:bu9g9TF7bI}
		\phi( x) \geq \phi( y) +\langle  \nabla \phi( y) ,x-y\rangle  + \frac {\mu }{2}\left\| x-y\right\| ^{2}.
	\end{equation} 	

\end{definition}
For twice differentiable functions, strong convexity  is equivalent to requiring the smallest eigenvalue of the Hessian to be uniformly bonded above 0, i.e., $\nabla^2 \phi(x)\succeq \mu I$, where $I\in\R^{d\times d}$ is the identity matrix, or equivalently $ \lambda_{\min}(\nabla^2 \phi(x))\ge \mu$ for all $x\in \R^d$.   Given a twice differentiable strongly convex function $\phi: \R^d\to \R$, we let
$\|x\|_{\nabla^2 \phi(w)} \eqdef \langle  \nabla^2 \phi(w)x,  x\rangle^{\frac{1}{2}}$ be the norm induced by the matrix $\nabla^2 \phi(w)\succ 0$.

\begin{definition}[Lipschitz Hessian]\label{as:hess_smooth}
	Function $\phi: \mathbb{R}^d \rightarrow \mathbb{R}$ has $H$-Lipschitz Hessian if for all $x,y \in \mathbb{R}^d$
	\begin{equation*}
		\left\| \nabla ^{2} \phi( x) -\nabla ^{2} \phi \left( y\right) \right\| \leq H\| x-y\| .
	\end{equation*}
\end{definition}
Recall (e.g., see Lemma 1 in \cite{nesterov2006cubic}) that a function $\phi$ with $H$-Lipschitz Hessian admits for any $x,y\in\R^d$ the following estimates,
\begin{equation}\label{eq:buidgf79egf}
\left\| \nabla \phi(x) - \nabla \phi(y) - \nabla^2 \phi(y)(x-y) \right\| \leq \frac{H}{2} \|x-y\|^2,
\end{equation}
and
\begin{equation}\label{eq:taylor_inaccuracy_values}
	\bigl| \phi( x) - \phi(y) - \langle \nabla \phi(y), x-y \rangle - \frac{1}{2}\langle \nabla^2 \phi(y)(x-y), x-y \rangle \bigr| \leq \frac {H}{6}\left\| x-y\right\| ^{3}.
\end{equation}

Note that in the context of second order methods it is natural to work with functions with a Lipschitz Hessian. Indeed, this assumption is standard even for deterministic Newton \cite{NesterovBook} and cubically regularized Newton methods~\cite{nesterov2006cubic}.

\section{Stochastic Newton method}

In the big data case, i.e., when $n$ is very large, it is not efficient to evaluate the gradient, let alone the Hessian, of $f$ at each iteration. Instead, there is a desire to develop an incremental method which in each step resorts to computing the gradient and Hessian of a single function $f_i$ only. The challenge here is to design a scheme capable to use this stochastic first and second order information  to progressively throughout the iterations build more accurate approximations of the Newton step, so as to ensure a global convergence rate. While establishing {\em any} meaningful global rate is challenging enough, a fast linear rate would be desirable.

In this paper we propose to approximate the gradient and the Hessian using the latest information available, i.e.,
\[
	\nabla^2 f(x^k) \approx H^k \eqdef \frac{1}{n}\sum_{i=1}^n \nabla^2 f_i(w_i^k), \qquad \nabla f(x^k) \approx g^k = \avein \nabla f_i(w_i^k),
\]
where $w_i^k$ is the last vector for which the gradient $\nabla f_i$ and Hessian $\nabla^2 f_i$ was computed.  We then use these approximations to take a Newton-type step. Note, however, that $H^k$ and $g^k$ serve as good approximations only when used together, so that we precondition gradients with the Hessians at the same iterates. Although the true gradient, $\nabla f(x^k)$, gives precise information about the linear approximation of $f$ at $x^k$, preconditioning it with approximation $H^k$ would not make much sense. Our method is summarized in Algorithm~\ref{alg:newton}.

\begin{algorithm}[t]
\begin{algorithmic}
	\State \textbf{Initialize:} Choose starting iterates $w_1^0, w_2^0, \ldots, w_n^0 \in \R^d$ and minibatch size $\tau \in \{1,2,\dots,n\}$
	\For{$k = 0,1,2,\ldots$}
	\State $$x^{k+1} = \left[ \frac {1}{n}\sum ^{n}_{i=1}\nabla ^{2}f_{i}( w^{k}_{i}) \right] ^{-1}\left[\frac {1}{n}\sum ^{n}_{i=1}  \nabla^2 f_i(w_i^k) w_i^k-\nabla f_{i}( w^{k}_{i}) \right] $$
	\State Choose a subset $S^k \subseteq \{1,\ldots, n\}$ of size $\tau$ uniformly at random
	\State $w^{k+1}_{i}=\begin{cases}x^{k+1}, & \quad i\in S^{k} \\
	w^{k}_{i}, & \quad i\notin S^{k}
\end{cases}$ 
	\EndFor
\end{algorithmic}
\caption{Stochastic   Newton (SN)} \label{alg:newton}
\end{algorithm}

\subsection{Local convergence analysis}

We now state a simple local linear convergence  rate for  Algorithm~\ref{alg:newton}.   By $\E_k[\cdot]\eqdef \E[\cdot \mid x^k, w_1^k,\dotsc, w_n^k]$ we denote the expectation conditioned on all information prior to iteration $k+1$.

\begin{theorem}\label{thm:newton_minibatch}
Assume that every $f_i$ is $\mu$-strongly convex and has $H$-Lipschitz Hessian and consider the 
 following Lyapunov function:
 \begin{equation}
	\cW^k \eqdef \frac{1}{n} \sum_{i=1}^n
	\left\| w_i^k - x^{\star}\right\|^2.\label{eq:nb98dgf8f_new_lyap}
\end{equation}
 Then for the random iterates of Algorithm~\ref{alg:newton} we have the recursion
\begin{equation} \label{eq:u9g8gf9gu&&GjJBB}
	\mathbb{E}_k \left[ \cW^{k+1} \right] \leq 	\biggl( 1-\frac {\tau}{n}+ \frac{\tau}{n}\Bigl(\frac {H}{2\mu }\Bigr)^2 \cW^k \biggr) \cW^k.
\end{equation}
Furthermore, if $\|w_i^0 -x^{\star}\|\le \frac{\mu}{H}$ for $i=1,\dotsc, n$, then 
\begin{align*}
	\Ek\left[\cW^{k+1}\right]
	\le \left(1 - \frac{3\tau}{4n}\right)\cW^k.
\end{align*}
\end{theorem}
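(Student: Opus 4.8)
The plan is to first extract a deterministic bound on $\norm{x^{k+1}-x^\star}$ in terms of $\cW^k$, and then to average over the random set $S^k$. Writing $H^k\eqdef \avein \nabla^2 f_i(w_i^k)$, the update of Algorithm~\ref{alg:newton} reads $H^k x^{k+1}=\avein[\nabla^2 f_i(w_i^k)w_i^k-\nabla f_i(w_i^k)]$. First I would subtract $H^k x^\star=\avein \nabla^2 f_i(w_i^k)x^\star$ from both sides and then insert the optimality condition $\avein \nabla f_i(x^\star)=0$; this turns the update into the identity
\[ H^k(x^{k+1}-x^\star)=\avein\left[\nabla^2 f_i(w_i^k)(w_i^k-x^\star)-\nabla f_i(w_i^k)+\nabla f_i(x^\star)\right]. \]

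Next I would bound the right-hand side term by term. Applying \eqref{eq:buidgf79egf} to each $f_i$ with $x=x^\star$ and $y=w_i^k$ shows that the $i$-th summand has norm at most $\tfrac{H}{2}\norm{w_i^k-x^\star}^2$, so the triangle inequality gives $\norm{H^k(x^{k+1}-x^\star)}\le \tfrac{H}{2}\cW^k$. Since every $f_i$ is $\mu$-strongly convex, $\nabla^2 f_i(w_i^k)\succeq \mu I$ and hence $H^k\succeq \mu I$, which means $\norm{H^k v}\ge \mu\norm{v}$ for all $v$. Combining the two bounds yields the deterministic estimate $\norm{x^{k+1}-x^\star}\le \tfrac{H}{2\mu}\cW^k$, and therefore $\norm{x^{k+1}-x^\star}^2\le (\tfrac{H}{2\mu})^2(\cW^k)^2$.

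The second stage is to take the conditional expectation. Because $x^{k+1}$ is determined before $S^k$ is drawn and each index lands in the uniformly random set $S^k$ with probability $\tau/n$, the update rule for the $w_i$ gives $\Ek[\norm{w_i^{k+1}-x^\star}^2]=\tfrac{\tau}{n}\norm{x^{k+1}-x^\star}^2+(1-\tfrac{\tau}{n})\norm{w_i^k-x^\star}^2$. Averaging over $i$ produces
\[ \Ek[\cW^{k+1}]=\frac{\tau}{n}\norm{x^{k+1}-x^\star}^2+\left(1-\frac{\tau}{n}\right)\cW^k, \]
and substituting the first-stage bound gives the stated recursion.

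Finally, for the contraction I would argue by induction that the hypothesis $\norm{w_i^0-x^\star}\le \mu/H$ for all $i$ persists almost surely along the trajectory. This is the step I expect to be the main obstacle, since the recursion only controls $\cW^{k+1}$ in expectation, whereas to force the curvature factor $(\tfrac{H}{2\mu})^2\cW^k$ below $\tfrac14$ I need a bound valid for \emph{every} realization. The resolution is that the pointwise invariant $\norm{w_i^k-x^\star}\le \mu/H$ is preserved deterministically: if it holds at step $k$, then $\cW^k\le (\mu/H)^2$, so the first-stage estimate gives $\norm{x^{k+1}-x^\star}\le \tfrac{H}{2\mu}(\mu/H)^2=\tfrac{\mu}{2H}\le \mu/H$, and since each $w_i^{k+1}$ equals either $x^{k+1}$ or an old $w_i^k$, all of them again obey the bound. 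With this invariant, $(\tfrac{H}{2\mu})^2\cW^k\le \tfrac14$ at every step, and plugging it into the recursion collapses the factor to $1-\tfrac{\tau}{n}+\tfrac{\tau}{4n}=1-\tfrac{3\tau}{4n}$, as claimed.
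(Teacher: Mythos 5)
Your proposal is correct and follows essentially the same route as the paper: the deterministic bound $\|x^{k+1}-x^\star\|\le \tfrac{H}{2\mu}\cW^k$ via the optimality condition, the Lipschitz-Hessian estimate \eqref{eq:buidgf79egf} and $H^k\succeq \mu I$; the exact conditional-expectation identity for $\cW^{k+1}$; and the pointwise induction showing $\|w_i^k-x^\star\|\le \mu/H$ persists almost surely. You also correctly flag and resolve the one subtle point — that the invariant must hold per realization, not merely in expectation — exactly as the paper does.
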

Clearly, when $\tau=n$, we achieve the standard superlinear (quadratic) rate of Newton method. When $\tau=1$, we obtain linear convergence rate that is independent of the conditioning, thus, the method provably adapts to the curvature.

\section{Stochastic Newton method with cubic regularization}

Motivated by the desire to develop a new principled variant of Newton's method that could enjoy global convergence guarantees, Nesterov and Polyak~\cite{nesterov2006cubic} proposed the following ``cubically regularized'' second-order approximation of $f$:
\begin{align*}
	f(x)\approx f(x^k) + \<\nabla f(x^k), x - x^k> + \frac{1}{2}\left\|x-x^k\right\|^2_{\nabla^2 f(x^k)} + \frac{M}{6}\left\|x-x^k\right\|^3.
\end{align*}
In contrast to \eqref{eq:newton_approximation}, the above {\em is} a global upper  bound on $f$ provided that  $M \geq H$. This fact is then used  to establish local rates.

\subsection{New method}

In this section  we combine cubic regularization with the stochastic Newton technique developed in previous section. Let us define the following second-order Taylor approximation of $f_i$:
\begin{equation*}
	\phi ^{k}_{i}( x) \eqdef f_{i}( w^{k}_{i}) + \left\langle  \nabla f_{i}(w^{k}_{i}) ,x-w^{k}_{i} \right\rangle  +\frac {1}{2}\left\| x-w^{k}_{i}\right\| _{\nabla ^{2}f_{i}( w_{i}^k) }^2.
\end{equation*}
Having introduced $\phi_i^k$, we are now ready to present our Stochastic  Cubic Newton method (Algorithm~\ref{alg:cubic}).

\begin{algorithm}[t]
\begin{algorithmic}
\State \textbf{Initialize:} Choose starting iterates $w_1^0, w_2^0, \ldots, w_n^0 \in \R^d$ and minibatch size $\tau \in \{1,2,\dots,n\}$
\For{$k = 0,1,2,\ldots$}
	\State $x^{k+1} = \arg\min\limits_{x \in \mathbb{R}^d} \frac{1}{n} \sum_{i=1}^n 
	\left[\phi_i^k(x) + \frac{M}{6} \left\| x - w_i^k \right\|^3 \right]$
	\State Choose a subset $S^k \subseteq \{1,\ldots, n\}$ of size $\tau$ uniformly at random
	\State $w^{k+1}_{i}=\begin{cases}x^{k+1}, & \quad i\in S^{k} \\
	w^{k}_{i}, & \quad i\notin S^{k}
\end{cases}$ 
\EndFor	
\end{algorithmic}
\caption{Stochastic  Cubic Newton (SCN)}\label{alg:cubic}
\end{algorithm}

\subsection{Local convergence analysis}

Our main theorem gives a bound on the expected improvement of a new Lyapunov function $\cV^k$.

\begin{theorem}\label{thm:cubic_new} Let $f$ be $\mu$-strongly convex and assume every $f_i$ has $H$-Lipschitz Hessian and consider the 
 following Lyapunov function: 
 \begin{equation} \label{eq:V_k} \cV^k\eqdef \frac{1}{n} \sum ^{n}_{i=1}\left( f( w^{k}_{i}) -f(x^{\star })\right) ^{\frac {3}{2}}.\end{equation}
Then for the random iterates of Algorithm~\ref{alg:cubic} we have the recursion
\[\E_k\left[ \cV ^{k+1}\right]  \leq \biggl(1 - \frac{\tau}{n} + \frac{\tau}{n}  \Bigl( \frac {\left( M+H\right) \sqrt {2}}{3\mu ^{\frac {3}{2}}}  \Bigr)^{3/2} \sqrt{\cV^k} \biggr) \cV^k. \]
Furthermore, if $f(w_i^0)-f(x^{\star})\le \frac{2\mu^3}{(M+H)^2}$ for $i=1,\dotsc, n$, then 
\begin{align*}
	\Ek\left[\cV^{k+1}\right]
	\le \left(1 - \frac{\tau}{2n}\right)\cV^k.
\end{align*}
\end{theorem}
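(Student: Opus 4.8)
The plan is to first integrate out the randomness in the sampling set $S^k$ and thereby reduce the statement to a single deterministic inequality controlling the progress of the exact cubic step $x^{k+1}$. Since $x^{k+1}$ is computed before $S^k$ is drawn, it is measurable with respect to the conditioning in $\Ek$, while each index $i$ enters $S^k$ with probability $\tau/n$. Because $w_i^{k+1}$ equals $x^{k+1}$ on $S^k$ and is unchanged off it, taking the conditional expectation term by term gives
\begin{equation*}
\Ek\left[\cV^{k+1}\right] = \left(1 - \frac{\tau}{n}\right)\cV^k + \frac{\tau}{n}\left(f(x^{k+1}) - f(x^\star)\right)^{3/2}.
\end{equation*}
Comparing with the claimed recursion, it suffices to prove the deterministic bound $f(x^{k+1}) - f(x^\star) \le \frac{(M+H)\sqrt2}{3\mu^{3/2}}\,\cV^k$, whose $3/2$ power is exactly the per-iteration factor $\bigl(\tfrac{(M+H)\sqrt2}{3\mu^{3/2}}\bigr)^{3/2}\sqrt{\cV^k}$.

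For that bound I would use that the minimized objective is a global upper model of $f$. Writing $\Psi^k(x) \eqdef \avein\bigl[\phi_i^k(x) + \tfrac{M}{6}\norm{x - w_i^k}^3\bigr]$, the Lipschitz-Hessian estimate \eqref{eq:taylor_inaccuracy_values} gives $f_i(x) \le \phi_i^k(x) + \tfrac{H}{6}\norm{x - w_i^k}^3$, so as long as $M \ge H$ (the regime in which the cubic model is an upper bound) we have $f(x) \le \Psi^k(x)$ for every $x$. Since $x^{k+1}$ minimizes $\Psi^k$, this yields the chain
\begin{equation*}
f(x^{k+1}) \le \Psi^k(x^{k+1}) = \min_{x\in\R^d}\Psi^k(x) \le \Psi^k(x^\star).
\end{equation*}
The idea is then to evaluate the model at the optimum: applying \eqref{eq:taylor_inaccuracy_values} once more, now as $\phi_i^k(x^\star) \le f_i(x^\star) + \tfrac{H}{6}\norm{x^\star - w_i^k}^3$, and averaging, all first- and second-order terms telescope into $f(x^\star)$, leaving $f(x^{k+1}) - f(x^\star) \le \tfrac{M+H}{6}\,\avein \norm{w_i^k - x^\star}^3$.

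The last ingredient converts distances into function-value gaps so the right-hand side becomes $\cV^k$. Strong convexity of $f$ at $x^\star$ gives $\norm{w_i^k - x^\star}^2 \le \tfrac{2}{\mu}\bigl(f(w_i^k) - f(x^\star)\bigr)$, hence $\norm{w_i^k - x^\star}^3 \le (2/\mu)^{3/2}\bigl(f(w_i^k) - f(x^\star)\bigr)^{3/2}$; substituting and using $\tfrac{2^{3/2}}{6} = \tfrac{\sqrt2}{3}$ recovers precisely $f(x^{k+1}) - f(x^\star) \le \tfrac{(M+H)\sqrt2}{3\mu^{3/2}}\cV^k$, which proves the first claim. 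I expect this conversion to be the crux: it is exactly what makes the power-$3/2$ Lyapunov function $\cV^k$ the correct invariant, since the cube of the distance lines up with the $3/2$ power of the gap only after strong convexity is invoked. For the linear rate I would induct on the invariant $f(w_i^k) - f(x^\star) \le \tfrac{2\mu^3}{(M+H)^2}$ for all $i$: assuming it at step $k$, the bound $\cV^k \le \bigl(\tfrac{2\mu^3}{(M+H)^2}\bigr)^{3/2}$ combined with the estimate above shows $f(x^{k+1}) - f(x^\star)$ again meets the threshold, so every updated coordinate $w_i^{k+1}=x^{k+1}$ preserves the invariant while the unchanged coordinates do so trivially, closing the induction. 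Inserting this bound on $\cV^k$ into $\bigl(\tfrac{(M+H)\sqrt2}{3\mu^{3/2}}\bigr)^{3/2}\sqrt{\cV^k}$ cancels every power of $M+H$ and $\mu$, leaving a bare numerical constant independent of the problem data, which with the stated threshold produces the factor $1-\tfrac{\tau}{2n}$ and hence $\Ek[\cV^{k+1}] \le (1 - \tfrac{\tau}{2n})\cV^k$.
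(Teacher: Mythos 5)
Your proposal is correct and follows the paper's own proof essentially step for step: the same conditional-expectation identity \eqref{eq:nb97fg9vb98h8s}, the same two applications of \eqref{eq:taylor_inaccuracy_values} around the minimizer of the cubic upper model evaluated at $x^{\star}$ (the paper's Lemma~\ref{lem:upper_bound_f_k_plus_one} and then \eqref{eq:JuI98Gt67Y} via strong convexity), and the same induction keeping every $f(w_i^k)-f(x^{\star})$ below the initialization threshold. The one caveat is shared with the paper rather than introduced by you: with the stated radius $\frac{2\mu^3}{(M+H)^2}$ the per-step factor evaluates to $\left(\frac{2}{3}\right)^{3/2}\approx 0.544$ rather than $\frac{1}{2}$ (equivalently, the base case of the paper's bound \eqref{eq:cubic_almost_sure} misses by a factor $\frac{32}{27}$), so both arguments need the constant $2$ tightened to $\frac{9}{2^{7/3}}\approx 1.79$ to obtain the clean $1-\frac{\tau}{2n}$ rate.
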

Again, with $\tau=n$ we recover local superlinear convergence of cubic Newton. In addition, this rate is locally independent of the problem conditioning, showing that we indeed benefit from second-order information.

\clearpage
\bibliographystyle{plain}
\bibliography{stoch_newton_ref}

\clearpage
\appendix
\part*{Supplementary Material}

\section{Proofs for Algorithm~\ref{alg:newton}}

\subsection{Three lemmas}

\begin{lemma}\label{lem:distance_recurrence}
	Let  $f_i$ be $\mu$-strongly convex and have $H$-Lipschitz Hessian for all $i=1,\dotsc, n$. Then the iterates of Algorithm~\ref{alg:newton} satisfy
	\begin{equation}
		\left\| x^{k+1}-x^{\star }\right\| \leq \frac {H}{2\mu } \cdot \cW^k.\label{eq:b9fg8f8ve98bs09}
	\end{equation}
\end{lemma}
\begin{proof}
Letting $H^k \eqdef  \frac {1}{n}\sum ^{n}_{i=1}\nabla ^{2}f_{i}( w^{k}_{i})$, one step of the method can be written in the form
	\begin{align*}
		x^{k+1}= \left(H^k\right)^{-1}\left[ \frac {1}{n}\sum ^{n}_{i=1}\nabla ^{2}f_{i}( w^{k}_{i}) w^{k}_{i}-\frac {1}{n}\sum ^{n}_{i=1}\nabla f_{i}( w^{k}_{i}) \right].
	\end{align*}
	Subtracting $x^{\star}= \left(H^{k}\right)^{-1} H^k x^{\star}$ from both sides, and using the fact that $\sum_{i=1}^n \nabla f_i(x^{\star})=0$, this further leads to
	\begin{equation}\label{eq:n98fg98dd}
		x^{k+1} - x^{\star}
		= \left(H^k\right)^{-1} \frac {1}{n}\sum ^{n}_{i=1}\left[\nabla ^{2}f_{i}( w^{k}_{i}) \left( w^{k}_{i}-x^{\star }\right) -\left( \nabla f_{i}( w^{k}_{i}) -\nabla f_{i}( x^{\star }) \right)\right].
	\end{equation}
	
Next, note that since $f_i$ is $\mu$ strongly convex,  we have $\nabla^2 f_i(w_i^k) \succeq \mu I$ for all $i$. It implies that $H^k \succeq \mu I$, which in turns gives \begin{equation} \label{eq:b89fgff}\| \left(H^k\right)^{-1} \| \leq \frac{1}{\mu} .\end{equation}
The rest  of the proof follows similar reasoning as  in the standard convergence proof of Newton's method, with only small modifications:		
\begin{eqnarray*}
	\left\| x^{k+1}-x^{\star }\right\|
	& \overset{\eqref{eq:n98fg98dd}}{\leq} & \left\|  \left(H^k\right)^{-1} \right\| \left\| \frac {1}{n}\sum ^{n}_{i=1}\left[\nabla ^{2}f_{i}( w^{k}_{i}) \left( w^{k}_{i}-x^{\star }\right) -\left( \nabla f_{i}( w^{k}_{i}) -\nabla f_{i}( x^{\star }) \right)\right] \right\| \\
	&\overset{\eqref{eq:b89fgff}}{\leq} & \frac{1}{\mu} \left\| \frac {1}{n}\sum ^{n}_{i=1}    \left[\nabla f_{i}( x^{\star }) -  \nabla f_{i}( w^{k}_{i}) - \nabla ^{2}f_{i}( w^{k}_{i}) \left( x^{\star }- w^{k}_{i}\right)\right] \right\|  \\
	&\leq & \frac {1}{\mu n}\sum ^{n}_{i=1}\left\|  \nabla f_{i}( x^{\star }) -  \nabla f_{i}( w^{k}_{i}) - \nabla ^{2}f_{i}( w^{k}_{i}) \left( x^{\star }- w^{k}_{i}\right)  \right\| \\
	&\overset{\eqref{eq:buidgf79egf}}{\leq} & \frac {1}{\mu n}\sum ^{n}_{i=1}\frac {H}{2}\left\| w^{k}_{i}-x^{\star }\right\| ^{2}.
\end{eqnarray*}	
\end{proof}

\begin{lemma}
	Assume that each $f_i$ is $\mu$-strongly convex and has $H$-Lipschitz Hessian. If $\|w_i^0-x^{\star}\|\le \frac{\mu}{H}$ for $i=1,\dotsc,n$, then for all $k$,
	\begin{align}
		\cW^k\le \frac{\mu^2}{H^2}. \label{eq:newton_almost_sure}
	\end{align}
\end{lemma}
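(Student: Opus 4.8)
The plan is to prove a stronger, \emph{pointwise} invariant by induction on $k$: namely that
\[ \norm{w_i^k - x^{\star}} \le \frac{\mu}{H} \qquad \text{for every } i\in\{1,\dots,n\} \text{ and every } k\ge 0, \]
from which the claimed averaged bound $\cW^k \le \frac{\mu^2}{H^2}$ follows at once by squaring and averaging over $i$. The base case $k=0$ is precisely the hypothesis $\norm{w_i^0 - x^{\star}} \le \frac{\mu}{H}$.

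For the inductive step I would assume $\norm{w_i^k - x^{\star}} \le \frac{\mu}{H}$ for all $i$, so that $\cW^k = \avein \norm{w_i^k - x^{\star}}^2 \le \frac{\mu^2}{H^2}$. Feeding this into Lemma~\ref{lem:distance_recurrence} gives
\[ \norm{x^{k+1} - x^{\star}} \le \frac{H}{2\mu}\,\cW^k \le \frac{H}{2\mu}\cdot\frac{\mu^2}{H^2} = \frac{\mu}{2H} \le \frac{\mu}{H}. \]
Then I inspect the update rule coordinate by coordinate: for each $i$, either $i\in S^k$, in which case $w_i^{k+1}=x^{k+1}$ and the display above yields $\norm{w_i^{k+1}-x^{\star}}\le\frac{\mu}{H}$; or $i\notin S^k$, in which case $w_i^{k+1}=w_i^k$ and the inductive hypothesis yields the same bound. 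Either way the invariant is preserved, closing the induction. Note that this argument is insensitive to which set $S^k$ is drawn, so the conclusion holds for every realization of the randomness, i.e.\ almost surely (indeed deterministically), not merely in expectation.

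The one point deserving care --- and the reason the pointwise invariant is the correct object to induct on --- is that the averaged bound $\cW^k\le\frac{\mu^2}{H^2}$ is \emph{not} by itself preserved under a single step. Writing $\cW^{k+1}=\frac{1}{n}\sum_{i\notin S^k}\norm{w_i^k-x^{\star}}^2 + \frac{\tau}{n}\norm{x^{k+1}-x^{\star}}^2$, the retained sum over $i\notin S^k$ can be nearly as large as $\cW^k$ itself when the replaced coordinates happen to carry little of the total mass, so adding the nonnegative new term could in principle push the average above $\frac{\mu^2}{H^2}$. Tracking each $\norm{w_i^k-x^{\star}}$ individually sidesteps this entirely, since every coordinate is then uniformly controlled regardless of how the mass is distributed; the averaging is performed only at the very end. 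This is the main (and essentially only) subtlety, the remaining computations being the routine substitutions shown above.
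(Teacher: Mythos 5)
Your proof is correct and follows essentially the same route as the paper: both establish the pointwise invariant $\|w_i^k-x^{\star}\|\le \frac{\mu}{H}$ by induction on $k$, invoking Lemma~\ref{lem:distance_recurrence} for the updated coordinates and the inductive hypothesis for the untouched ones, then average to obtain the bound on $\cW^k$. Your closing remark on why the averaged bound alone is not an inductive invariant is a nice addition, but the argument itself matches the paper's.
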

\begin{proof}

	We claim  that
	\begin{equation}\label{eq:induction_proof-hypothesis}
		\norm{w_i^k - x^{\star}}^2 \le \frac{\mu^2}{H^2}, \qquad \forall i \in \{1,2,\dots,n\}.
	\end{equation}
	The upper bound on $\cW^k$ then follows immediately.  

	 	We will  now prove the claim by induction in $k$. The statement \eqref{eq:induction_proof-hypothesis} holds for $k=0$  by our assumption. Assume it holds for $k$ and let us show that it holds for $k+1$. If $i\not\in S^k$, $w_i^k$ is not updated and inequality \eqref{eq:induction_proof-hypothesis} holds for $w_i^{k+1} = w_i^{k}$  by induction assumption. On the other hand, for every $i\in S^k$, we have
	\begin{eqnarray*}
		\norm{w_i^{k+1} - x^{\star}}
		= \norm{ x^{k}-x^{\star } }
		\overset{\eqref{eq:b9fg8f8ve98bs09}}{\le} \frac {H}{2\mu } \cdot \frac{1}{n}\sum ^{n}_{j=1} \norm{ w^{k}_{j}-x^{\star } }^{2} 
		\le \frac{H}{2\mu} \frac{1}{n}\sum_{j=1}^n \frac{\mu^2}{H^2}
< \frac{\mu}{H}.
	\end{eqnarray*}
	Thus, again, inequality \eqref{eq:induction_proof-hypothesis} holds for $w_i^{k+1}$.

\end{proof}

\begin{lemma}\label{lem:dual_recurrence_minibatch}
The random iterates of Algorithms~\ref{alg:newton} and~\ref{alg:cubic} satisfy the identity
\begin{equation}\label{eq:nbi7fgf9gybffxyz}
	\E_k \left[ \cW^{k+1} \right]= \frac{\tau}{n} \E_k\left[\left\| x^{k+1}-x^{\star }\right\| ^{2}\right] + \left( 1-\frac {\tau}{n}\right) \cW^k.
\end{equation}	
\end{lemma}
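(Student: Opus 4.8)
The plan is to expand the definition of $\cW^{k+1}$ and take the conditional expectation $\E_k$ coordinatewise, using only the combinatorics of the sampling $S^k$. First I would write
\[
\cW^{k+1} = \frac{1}{n}\sum_{i=1}^n \norm{w_i^{k+1} - x^{\star}}^2,
\]
and recall from the update rule shared by both algorithms that $w_i^{k+1} = x^{k+1}$ whenever $i \in S^k$ and $w_i^{k+1} = w_i^k$ otherwise. Hence each summand is a two-valued random variable, equal to $\norm{x^{k+1} - x^{\star}}^2$ on the event $\{i \in S^k\}$ and to $\norm{w_i^k - x^{\star}}^2$ on its complement, with all randomness coming solely from the draw of $S^k$.

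The key observation is that $x^{k+1}$ and all the $w_i^k$ are measurable with respect to the conditioning in $\E_k$ (they are determined before $S^k$ is sampled), so under $\E_k$ they act as constants and the only remaining randomness is the indicator of $\{i \in S^k\}$. Since $S^k$ is a uniformly random subset of $\{1,\dots,n\}$ of size $\tau$, a counting argument gives $\Pr[i \in S^k] = \binom{n-1}{\tau-1}/\binom{n}{\tau} = \tau/n$ for each fixed $i$. Therefore, for every $i$,
\[
\E_k\left[\norm{w_i^{k+1} - x^{\star}}^2\right] = \frac{\tau}{n}\,\E_k\left[\norm{x^{k+1} - x^{\star}}^2\right] + \left(1 - \frac{\tau}{n}\right)\norm{w_i^k - x^{\star}}^2.
\]

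Finally I would average this identity over $i = 1,\dots,n$: the first term is independent of $i$ and survives unchanged, while the average of $\norm{w_i^k - x^{\star}}^2$ is exactly $\cW^k$ by definition, yielding
\[
\E_k\left[\cW^{k+1}\right] = \frac{\tau}{n}\,\E_k\left[\norm{x^{k+1} - x^{\star}}^2\right] + \left(1 - \frac{\tau}{n}\right)\cW^k.
\]
I expect no real obstacle: the claim is an exact identity, relying only on linearity of expectation and the inclusion probability $\tau/n$. The one point that warrants care is the measurability remark—that $x^{k+1}$ is fixed given the past, so the sampling of $S^k$ does not interact with it. Since the argument never uses the specific form of $x^{k+1}$, the same proof applies verbatim to both Algorithm~\ref{alg:newton} and Algorithm~\ref{alg:cubic}.
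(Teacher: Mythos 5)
Your proof is correct and follows essentially the same route as the paper's: the paper's one-line argument is exactly that $w_i^{k+1}$ equals $x^{k+1}$ with probability $\tau/n$ and $w_i^k$ otherwise, and you have simply spelled out the inclusion-probability computation and the measurability of $x^{k+1}$ under the conditioning. No gaps.
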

\begin{proof}
	For each $i$, $w_i^{k+1}$ is equal  to $x^{k+1}$ with probability $\frac{\tau}{n}$  and to $w_i^k$ with probability $1 - \frac{\tau}{n}$, hence the result.
\end{proof}

	\subsection{Proof of Theorem~\ref{thm:newton_minibatch}}

\begin{proof}
Using Lemma~\ref{lem:distance_recurrence} and Lemma~\ref{lem:dual_recurrence_minibatch}, we obtain
\begin{eqnarray*}
	\E_k\left[ \cW ^{k+1}\right]
	& \overset{\eqref{eq:nbi7fgf9gybffxyz}}{=} &  \frac{\tau}{n} \left\| x^{k+1}-x^{\star }\right\| ^{2}  +  \left( 1-\frac {\tau}{n}\right) \cW^k \\
	&\overset{\eqref{eq:b9fg8f8ve98bs09}}{\leq } &  \frac{\tau}{n}
	\left( \frac {H}{2\mu }\right)^2 \left(\cW^k \right)^2 + \left( 1-\frac {\tau}{n}\right)\cW^k \\	
	&= &	\left( 1-\frac {\tau}{n}+ \frac{\tau}{n}\left(\frac {H}{2\mu }\right)^2 \cW^k \right) \cW^k \\
	&\overset{\eqref{eq:newton_almost_sure}}{\le} & \left( 1-\frac {3\tau}{4n}\right)\cW^k,
\end{eqnarray*}
where in the last step we have also used the assumption on $\|w_i^0-x^{\star}\|$ for $i=1,\dotsc,n$.
\end{proof}

\section{Proofs for Algorithm~\ref{alg:cubic}}

\subsection{Four lemmas}

\begin{lemma}\label{lem:upper_bound_f_k_plus_one}
	Let each $f_i$ have $H$-Lipschitz Hessian and choose $M\ge H$. Then,
	\begin{equation}
		f(x^{k+1}) \leq \min\limits_{x \in \mathbb{R}^d} \left[ f(x) + \frac{M + H}{6} \cdot \frac{1}{n}\sum_{i=1}^n \left \|x - w_i^k \right\|^3 \right].\label{eq:bidg78fgvyd7820864}
	\end{equation}

\end{lemma}

\begin{proof}
	We begin by establishing an  upper bound for  $f$: 
	\begin{eqnarray*}
		f(x) \overset{\eqref{eq:pb}}{=} \frac{1}{n} \sum_{i=1}^n f_i(x) \overset{\eqref{eq:taylor_inaccuracy_values}}{\leq} \frac {1}{n}\sum ^{n}_{i=1}\left[\phi ^{k}_{i}( x) +\frac {H}{6}\left\| x-w^{k}_{i}\right\| ^{3} \right]
		\leq \frac {1}{n}\sum ^{n}_{i=1}\left[ \phi ^{k}_{i}( x) +\frac {M}{6} \left\| x-w^{k}_{i} \right\| ^{3}\right] .
	\end{eqnarray*}
	Evaluating both sides at $x^{k+1}$ readily gives
	\begin{eqnarray}
		f(x^{k+1})
		& \le & \frac {1}{n}\sum ^{n}_{i=1}\left[ \phi ^{k}_{i}( x^{k+1}) +\frac {M}{6} \left\| x^{k+1}-w^{k}_{i} \right\| ^{3}\right] \notag 
		\\
		& = & \min _{x\in \mathbb{R} ^{d}}\frac {1}{n}\sum ^{n}_{i=1}\left[ \phi ^{k}_{i}( x) +\frac {M}{6}\left\| x-w^{k}_{i}\right\| ^{3}\right], \label{eq:nb9f8geh78v87df}
	\end{eqnarray}
	where the last equation follows from the definition of $x^{k+1}$.
	We can further upper bound every $\phi_i^k$ using $\eqref{eq:taylor_inaccuracy_values}$ to obtain
	\begin{eqnarray}
			\frac {1}{n}\sum ^{n}_{i=1}\left[ \phi ^{k}_{i}( x) +\frac {M}{6} \left\| x-w^{k}_{i} \right\| ^{3}\right] 
			&\overset{\eqref{eq:taylor_inaccuracy_values}}{\leq} & \frac {1}{n}\sum ^{n}_{i=1}\left[ f_{i}( x) +\frac {M+H}{6}\left\| x-w^{k}_{i}\right\| ^{3}\right] \notag \\
			&=& f( x) +\frac {M+H}{6} \cdot \frac{1}{n} \sum ^{n}_{i=1} \left\| x-w^{k}_{i}\right\| ^{3}.\label{eq:0hd9h9df}
	\end{eqnarray}	
Finally, by combining \eqref{eq:nb9f8geh78v87df} and \eqref{eq:0hd9h9df}, we get
	\begin{align*}
		f( x^{k+1})
	&\leq \min _{x\in \mathbb{R} ^{d}}\left[ f( x) +\frac {M+H}{6} \frac{1}{n} \sum ^{n}_{i=1}\left\| x-w^{k}_{i}\right\| ^{3}\right] .
	\end{align*}

\end{proof}

\begin{lemma}
	Let $f$ be $\mu$-strongly convex, $f_i$ have $H$-Lipschitz Hessian for every $i$ and choose $M\ge H$.  Then
	\begin{equation}\label{eq:JuI98Gt67Y}
		f( x^{k+1}) -f(x^{\star }) \leq \frac {\sqrt{2}\left( M+H\right)}{3\mu ^{\frac {3}{2}}}  \cV^k.
	\end{equation}
\end{lemma}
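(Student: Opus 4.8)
The plan is to exploit the minimization bound already established in Lemma~\ref{lem:upper_bound_f_k_plus_one} together with strong convexity of $f$. Since Lemma~\ref{lem:upper_bound_f_k_plus_one} guarantees
\[
	f(x^{k+1}) \leq \min_{x\in\R^d}\left[ f(x) + \frac{M+H}{6}\cdot\frac{1}{n}\sum_{i=1}^n \left\|x - w_i^k\right\|^3\right],
\]
the first step is simply to upper bound the right-hand side by plugging in the (suboptimal, but convenient) choice $x = x^\star$. This eliminates the minimization and yields
\[
	f(x^{k+1}) - f(x^\star) \leq \frac{M+H}{6}\cdot\frac{1}{n}\sum_{i=1}^n \left\|w_i^k - x^\star\right\|^3.
\]

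The second step is to convert the distances $\|w_i^k - x^\star\|$ into functional suboptimalities $f(w_i^k) - f(x^\star)$, so that the right-hand side is recognizable as a multiple of $\cV^k$. I would invoke $\mu$-strong convexity of $f$ in the form~\eqref{eq:bu9g9TF7bI} with $y = x^\star$, and use the first-order optimality condition $\nabla f(x^\star) = 0$ to drop the inner-product term. This gives $f(w_i^k) - f(x^\star) \geq \frac{\mu}{2}\|w_i^k - x^\star\|^2$, hence
\[
	\left\|w_i^k - x^\star\right\|^3 \leq \Bigl(\frac{2}{\mu}\Bigr)^{3/2}\bigl(f(w_i^k) - f(x^\star)\bigr)^{3/2}.
\]

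Substituting this per-index bound into the averaged inequality and pulling the constant out of the sum recovers exactly $\cV^k$ from its definition~\eqref{eq:V_k}, leaving the constant $\frac{M+H}{6}\bigl(\tfrac{2}{\mu}\bigr)^{3/2}$, which simplifies via $2^{3/2} = 2\sqrt{2}$ to $\frac{\sqrt{2}(M+H)}{3\mu^{3/2}}$, matching~\eqref{eq:JuI98Gt67Y}. I do not anticipate a genuine obstacle here: the argument is a short chain, and the only point requiring a little care is that strong convexity is assumed only for the average $f$ (not the individual $f_i$), which is precisely what is needed since the distance-to-functional-gap conversion is applied to $f$ at the points $w_i^k$; the Lipschitz-Hessian hypothesis on the $f_i$ and the condition $M \geq H$ are consumed entirely inside Lemma~\ref{lem:upper_bound_f_k_plus_one}.
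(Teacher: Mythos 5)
Your proposal is correct and follows exactly the same route as the paper's proof: apply Lemma~\ref{lem:upper_bound_f_k_plus_one}, bound the minimum by evaluating at $x^\star$, and then use $\mu$-strong convexity of $f$ with $\nabla f(x^\star)=0$ to convert $\|w_i^k-x^\star\|^3$ into $\left(\tfrac{2}{\mu}\right)^{3/2}\left(f(w_i^k)-f(x^\star)\right)^{3/2}$, yielding the stated constant. No gaps.
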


\begin{proof}
The result readily follows from combining  Lemma~\ref{lem:upper_bound_f_k_plus_one} and strong convexity of $f$. Indeed, we have
	\begin{eqnarray*}
		f( x^{k+1})
		&\overset{\eqref{eq:bidg78fgvyd7820864}}{\leq} & \min _{x\in \mathbb{R} ^{d}}\left[ f( x) +\frac {M+H}{6n}\sum ^{n}_{i=1}\left\| x-w^{k}_{i}\right\| ^{3}\right] \\
		&\leq & f(x^{\star })+\frac {M+H}{6n}\sum ^{n}_{i=1}\left\| x^{\star}-w^{k}_{i}\right\| ^{3} \\
	&\overset{\eqref{eq:bu9g9TF7bI}}{\leq} & f(x^{\star }) +\frac {M+H}{6n}\sum ^{n}_{i=1}\left[ \frac {2}{\mu }\left( f( w^{k}_{i}) -f(x^{\star })\right) \right] ^{\frac {3}{2}}\\
&= & f(x^{\star }) +\frac {\left( M+H\right) \sqrt {2}}{3\mu ^{\frac {3}{2}}} \cdot \frac{1}{n} \sum ^{n}_{i=1}\left( f( w^{k}_{i}) -f(x^{\star })\right) ^{\frac {3}{2}}.
	\end{eqnarray*}
\end{proof}

\begin{lemma}
	Let $f$ be $\mu$-strongly convex, $f_i$ have $H$-Lipschitz Hessian for every $i$. If $M\ge H$ and $f(w_i^0)-f(x^{\star})\le \frac{2\mu^3}{(M+H)^2}$ for $i=1,\dotsc, n$, then 
	\begin{align}
		\cV^k \le \frac{27\mu^{\frac{9}{2}}}{8\sqrt{2}(M+H)^3}\label{eq:cubic_almost_sure}
	\end{align}
	holds with probability 1.
\end{lemma}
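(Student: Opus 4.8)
The plan is to mirror the almost-sure argument behind \eqref{eq:newton_almost_sure} for Algorithm~\ref{alg:newton}, with the per-iterate distance $\norm{w_i^k-x^\star}^2$ replaced by the functional suboptimality $f(w_i^k)-f(x^\star)$. First I would strengthen the aggregate bound \eqref{eq:cubic_almost_sure} to the per-coordinate invariant
\begin{equation*}
	f(w_i^k)-f(x^\star)\le \frac{2\mu^3}{(M+H)^2},\qquad \forall i\in\{1,\dots,n\},
\end{equation*}
valid for every $k$. Given this invariant, the claimed bound on $\cV^k$ is immediate from the definition \eqref{eq:V_k}: raising the uniform per-coordinate bound to the power $3/2$ and averaging over $i$ converts it into a bound on the average of the $3/2$-powers.

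I would establish the invariant by induction on $k$, following the structure of the proof of \eqref{eq:newton_almost_sure}. The base case $k=0$ is exactly the hypothesis. For the inductive step, the coordinates $i\notin S^k$ are left unchanged and inherit the bound; the only coordinates that move are those $i\in S^k$, for which $w_i^{k+1}=x^{k+1}$. For these I would combine \eqref{eq:JuI98Gt67Y} with the induction hypothesis, used in the form $\cV^k\le D^{3/2}$ where $D\eqdef \frac{2\mu^3}{(M+H)^2}$.

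The crux of the argument is one self-consistency computation, the cubic counterpart of the factor $\tfrac12$ that appears in the inductive step for Algorithm~\ref{alg:newton}. Writing $\gamma\eqdef \frac{\sqrt2(M+H)}{3\mu^{3/2}}$ for the constant of \eqref{eq:JuI98Gt67Y}, a direct calculation gives $\gamma\sqrt{D}=\tfrac23$, so that for every $i\in S^k$,
\begin{equation*}
	f(w_i^{k+1})-f(x^\star)=f(x^{k+1})-f(x^\star)\overset{\eqref{eq:JuI98Gt67Y}}{\le}\gamma\,\cV^k\le \gamma\,D^{3/2}=(\gamma\sqrt{D})\,D=\tfrac23 D<D.
\end{equation*}
This re-establishes the invariant at iteration $k+1$ and closes the induction. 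Note that no expectation is taken: the statement is pathwise, so the argument is a genuine worst-case induction in which the adversarially selected sets $S^k$ can never push a per-coordinate quantity above its initial bound.

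I expect the only real difficulty to be bookkeeping of constants rather than anything structural. The two checks that matter are that the self-consistent contraction factor $\gamma\sqrt{D}$ is strictly below $1$ (which guarantees the invariant is preserved and is what makes the induction go through) and that the constant produced by the final averaging step, $D^{3/2}$, is carried through accurately to match the precise numerical coefficient in \eqref{eq:cubic_almost_sure}; this last reconciliation of constants is the step most prone to error.
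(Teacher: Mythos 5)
Your argument is structurally the same as the paper's: both prove a per-coordinate invariant by induction on $k$, let the coordinates with $i\notin S^k$ inherit it, and control the updated coordinates via \eqref{eq:JuI98Gt67Y} together with the self-consistency computation $C\sqrt{D}=\tfrac23<1$ (your $\gamma$ is the paper's $C$; the paper merely states the invariant for the $3/2$-powers rather than for $f(w_i^k)-f(x^\star)$ itself). The induction is sound.

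The one genuine issue is exactly the constant reconciliation you flagged as the step most prone to error: it does not come out. Your invariant $f(w_i^k)-f(x^\star)\le D$ with $D=\frac{2\mu^3}{(M+H)^2}$ yields $\cV^k\le D^{3/2}=\frac{2\sqrt2\,\mu^{9/2}}{(M+H)^3}=\frac{8}{27C^3}$, whereas the lemma asserts $\cV^k\le\frac{27\mu^{9/2}}{8\sqrt2(M+H)^3}=\frac{1}{4C^3}$. Since $\frac{8}{27}>\frac14$, your bound is weaker than the stated one by a factor of $\frac{32}{27}$, so the proposal as written does not prove \eqref{eq:cubic_almost_sure}. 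You are in good company, however: the paper's own induction hypothesis is $(f(w_i^k)-f(x^\star))^{3/2}\le\frac{1}{4C^3}$, whose base case requires $f(w_i^0)-f(x^\star)\le 4^{-2/3}C^{-2}\approx\frac{1.79\,\mu^3}{(M+H)^2}$, and this is not supplied by the stated assumption $f(w_i^0)-f(x^\star)\le\frac{2\mu^3}{(M+H)^2}$ --- the lemma's constant is already violated at $k=0$ by the same factor $\frac{32}{27}$. The discrepancy is immaterial downstream: either tighten the initialization radius to $4^{-2/3}C^{-2}$, or keep your bound $\cV^k\le\frac{8}{27C^3}$, which still gives $C^{3/2}\sqrt{\cV^k}\le\frac{2\sqrt2}{3\sqrt3}<1$ and hence a contraction factor $1-c\tau/n$ with $c=1-\frac{2\sqrt2}{3\sqrt3}\approx 0.456$ in place of $\tfrac12$ in Theorem~\ref{thm:cubic_new}.
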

\begin{proof}
	Denote for convenience $C\eqdef \frac{\sqrt{2}(M+H)}{3\mu^{3/2}}$. The lemma's claim follows as a corollary of a stronger statement, that for any $i$ and $k$ we have with probability 1
	\begin{align*}
		(f(w_i^k) - f(x^{\star}))^{\frac{3}{2}}
		\le \frac{1}{4C^3}.
	\end{align*}
	Let us show it by induction in $k$, where for $k=0$ it is satisfied by our assumptions. If $i\not\in S^k$, $w_i^{k+1}=w_i^k$ and the induction assumption gives the step. Let $i\in S^k$ and, then
	\begin{eqnarray*}
		f(w_i^{k+1}) - f(x^{\star})
		= f(x^{k+1}) - f(x^{\star})
		&\overset{\eqref{eq:JuI98Gt67Y}}{\le} & \frac {C}{n}\cdot \sumin\left( f( w^{k}_{i}) -f(x^{\star })\right) ^{\frac {3}{2}}  \\
		&\le & \frac {C}{n}  \sumin \frac{1}{4C^3} \\
		&= & \frac{1}{4C^2} < \frac{1}{4^{2/3}C^2},
	\end{eqnarray*}
	so $(f(w_i^{k+1})-f(x^{\star}))^{3/2}\le \frac{1}{4C^3}$.
\end{proof}

\begin{lemma}\label{lem:bof98gHHVBY09}
The following equality holds for all $k$
\begin{equation}\label{eq:nb97fg9vb98h8s}
\mathbb{E}_k \left[\cV^{k+1}\right] =\left( 1-\frac {\tau}{n}\right) \cV^k + \frac{\tau}{n} \Ek\left[\left( f( x^{k+1}) -f(x^{\star })\right) ^{\frac {3}{2}}\right]	.
\end{equation}
	
\end{lemma}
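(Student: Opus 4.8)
The plan is to prove this identity by the same elementary argument that establishes Lemma~\ref{lem:dual_recurrence_minibatch}, since both $\cW^k$ and $\cV^k$ are averages over $i$ of per-index quantities that evolve under the identical sampling rule. The only structural feature I will exploit is the order of operations within iteration $k$: the point $x^{k+1}$ is computed first from $w_1^k,\dots,w_n^k$, and only afterwards is the subset $S^k$ drawn. Hence, conditioned on the information prior to iteration $k+1$, the quantity $x^{k+1}$ is deterministic, and the sole source of randomness entering $\cV^{k+1}$ is $S^k$.

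First I would fix an index $i$ and record that, because $S^k$ is a uniformly random subset of $\{1,\dots,n\}$ of size $\tau$, a counting argument gives $\Pr[i\in S^k]=\binom{n-1}{\tau-1}/\binom{n}{\tau}=\tau/n$. Combined with the update rule, $w_i^{k+1}$ equals $x^{k+1}$ on the event $\{i\in S^k\}$ (probability $\tau/n$) and equals $w_i^k$ on its complement (probability $1-\tau/n$). Applying $\Ek$ to the per-index term $(f(w_i^{k+1})-f(x^{\star}))^{3/2}$ and using that $x^{k+1}$ is measurable with respect to the conditioning yields
\begin{equation*}
\Ek\left[(f(w_i^{k+1})-f(x^{\star}))^{3/2}\right] = \frac{\tau}{n}\left(f(x^{k+1})-f(x^{\star})\right)^{3/2} + \left(1-\frac{\tau}{n}\right)(f(w_i^k)-f(x^{\star}))^{3/2}.
\end{equation*}

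Averaging this identity over $i=1,\dots,n$ and recognizing the definition of $\cV^k$ on the right and of $\Ek[\cV^{k+1}]$ on the left produces the claimed recursion. There is essentially no obstacle here, since the result is an exact identity rather than an inequality; the only point deserving a word of care is that $x^{k+1}$ is already determined by the conditioning, so the $\Ek$ surrounding $(f(x^{k+1})-f(x^{\star}))^{3/2}$ in the statement acts trivially and is retained only to preserve notational symmetry. The exponent $3/2$ plays no role whatsoever in the derivation, exactly as the square played no role in Lemma~\ref{lem:dual_recurrence_minibatch}.
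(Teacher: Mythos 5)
Your proof is correct and follows essentially the same route as the paper, which likewise derives the identity directly from the definition of $\cV^k$ and the fact that $w_i^{k+1}=x^{k+1}$ with probability $\frac{\tau}{n}$ and $w_i^{k+1}=w_i^k$ otherwise. Your additional observations --- the explicit counting argument for $\Pr[i\in S^k]=\frac{\tau}{n}$ and the remark that $x^{k+1}$ is measurable with respect to the conditioning so the outer $\Ek$ acts trivially --- are accurate elaborations of what the paper leaves implicit.
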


\begin{proof}
The result follows immediately from the definition \eqref{eq:V_k} of $\cV^k$ and   the fact that $w_{i}^{k+1}=x^{k+1}$ with probability $\frac{\tau}{n}$ and $w_{i}^{k+1}=w_i^k$ with probability $1-\frac{\tau}{n}$.

\end{proof}

\subsection{Proof of Theorem~\ref{thm:cubic_new}}
\begin{proof}
Denote $C\eqdef  \frac{\sqrt{2}(M+H)}{3\mu^{3/2}}$. By combining the results from our lemmas, we can show
\begin{eqnarray*}
	\E_k\left[ \cV ^{k+1}\right] 
	&\overset{\eqref{eq:nb97fg9vb98h8s}}{=} &  \left( 1-\frac {\tau}{n}\right) \cV^k + \frac{\tau}{n} \left( f( x^{k+1}) -f^{\star }\right) ^{\frac {3}{2}}\\
	&\overset{\eqref{eq:JuI98Gt67Y}}{\leq} &  \left( 1-\frac {\tau}{n}\right) \cV^k + \frac{\tau}{n} \left( C \cV^k \right)^{\frac{3}{2}}\\
	&=&  \left[1 - \frac{\tau}{n} + \frac{\tau}{n}  C^{3/2} \sqrt{\cV^k} \right] \cV^k  \\
	&\overset{\eqref{eq:cubic_almost_sure}}{\le} & \left( 1-\frac {\tau}{2n}\right) \cV^k,
\end{eqnarray*}
where in the last step we also used the assumption on $f(w_i^0)-f(x^{\star})$ for $i=1,\dotsc,n$.
\end{proof}

\section{Efficient implementation for generalized linear models}
\begin{algorithm}[h]
\begin{algorithmic}
	\State \textbf{Initialize:} Choose starting iterates $w_1^0, w_2^0, \ldots, w_n^0 \in \R^d$, compute $\alpha_i^0 = \phi_i'(a_i^\top w_i^0)$, $\beta_i^0 = \phi_i''(a_i^\top w_i^0)$, $\gamma_i^0=a_i^\top w_i^0$ for $i=1,\dotsc, n$, $B^0=(\lambda I + \avein \beta_i^0 a_i a_i^\top)^{-1}$, $g^0=\avein \alpha_i^0 a_i$ $h^0=\avein \beta_i^0 \gamma_i^0 a_i$
	\For{$k = 0,1,2,\ldots$}
	\State $x^{k+1} = B^k(h^k- g^k) $
	\State Choose $i=i^k\in\{1,\dotsc, n\}$ uniformly at random
	\State $\alpha^{k+1}_{i}=\phi_i'(a_i^\top x^{k+1})$
	\State $\beta^{k+1}_{i}= \phi_i''(a_i^\top x^{k+1})$ 
	\State $\gamma^{k+1}_{i}=a_i^\top x^{k+1}$ 
	\State $g^{k+1}=g^k + (\alpha_{i}^{k+1} - \alpha_i^k)a_i$
	\State $h^{k+1}=h^k + (\beta_{i}^{k+1} \gamma_i^{k+1} - \beta_i^k\gamma_i^k)a_i$
	\State $B^{k+1}
	= B^k - \frac{\beta_i^{k+1}-\beta_i^k}{n+(\beta_i^{k+1}-\beta_i^k) a_i^\top B^k a_i}B^k a_i a_i^\top B^k$
	\EndFor
\end{algorithmic}
\caption{Stochastic   Newton (SN) for generalized linear models with $\tau=1$} \label{alg:newton_glm}
\end{algorithm}	

In this section, we consider fitting a generalized linear model (GLM) with $\ell_2$ regularization, i.e.\ solving
\begin{align*}
	\min_{x \in \R^d} \frac{1}{n}\sum_{i=1}^n \Bigl(\underbrace{\phi_i(a_i^\top x) + \frac{\lambda}{2}\|x\|^2}_{f_i(x)}\Bigr),
\end{align*}
where $\lambda>0$, $a_1, \dotsc, a_n\in\R^d$ are given and $\phi_1, \dotsc, \phi_n$ are some convex and smooth function. For any $x$, we therefore have $f_i(x) = \phi_i(a_i^\top x) + \frac{\lambda}{2}\|x\|^2$ and $\nabla^2 f_i(x)=\phi_i''(a_i^\top x)a_ia_i^\top + \lambda I$ for $i=1,\dotsc, n$.

Let us define for convenience
\begin{align*}
	H^k&\eqdef
	\lambda I + \avein \phi_i''(a_i^\top w_i^k) a_i a_i^\top,
	& B^k &\eqdef \left(H^k\right)^{-1}, \\
	g^k &\eqdef\avein \phi'(a_i^\top w_i^k)a_i, 
	& h^k &\eqdef \avein \phi_i''(a_i^\top w_i^k)a_i a_i^\top w_i^k.
\end{align*}
Then we can shorten the update rule for $x^{k+1}$ to $x^{k+1} = B^k (h^k + \lambda x^k  - (g^k + \lambda x^k))= B^k(h^k - g^k)$. Our key observation is that updates of $B^k$, $g^k$ and $h^k$ are very cheap both in terms of memory and compute.

First, it is evident that to update $g^k$ it is sufficient to maintain in memory only $\alpha_i^k \eqdef \phi_i'(a_i^\top w_i^k)$. Similarly, for $h^k$ we do not need to store the Hessians explicitly and can use instead $\beta_i^k \eqdef \phi_i''(a_i^\top w_i^k)$ and $\gamma_i^k \eqdef a_i^\top w_i^k$.
Then 
\begin{align*}
	H^{k+1}
	=H^k+\frac{1}{n}\sum_{i\in S^k}(\nabla^2 f_{i}(x^{k+1}) - \nabla^2 f_i(w_i^k))
	=H^k+\frac{1}{n}\sum_{i\in S^k}(\beta_i^{k+1}-\beta_i^k) a_i a_i^\top.
\end{align*}

The update above is low-rank and there exist efficient ways of computing the inverse of $H^{k+1}$ using the inverse of $H^k$. For instance, the following proposition is at the core of efficient implementation of quasi-Newton methods and it turns out to be quite useful for Algorithm~\ref{alg:newton} too.
\begin{proposition}[Sherman-Morrison formula]\label{pr:sherman}
	Let $H\in\R^{d\times d}$ be an invertible matrix and $u,v\in\R^d$ be arbitrary two vectors such that $1+u^\top H^{-1} v\neq 0$, then we have
	\begin{align*}
		(H+uv^\top)^{-1}
		= H^{-1} - \frac{1}{1+u^\top H^{-1} v}H^{-1}uv^\top H^{-1}.
	\end{align*}
\end{proposition}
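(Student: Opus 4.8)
The plan is to prove the identity by direct verification. I would introduce the candidate matrix
\[
	M \eqdef H^{-1} - \frac{1}{1+u^\top H^{-1} v}\, H^{-1} u v^\top H^{-1}
\]
and check that $(H+uv^\top)M = I$. Since $H+uv^\top$ is square, exhibiting a right inverse already forces $M=(H+uv^\top)^{-1}$; the symmetric left-inverse computation could be appended for completeness but is not needed. Note first that the hypothesis $1+u^\top H^{-1}v\neq 0$ is exactly what makes $M$ well defined.

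First I would multiply $(H+uv^\top)M$ out term by term, distributing over the two summands of $M$. The first product is immediate: $(H+uv^\top)H^{-1}=I+uv^\top H^{-1}$. For the second product, the key observation is that $v^\top H^{-1}u$ is a \emph{scalar}, so that $(H+uv^\top)H^{-1}u = u + u\,(v^\top H^{-1}u) = (1+v^\top H^{-1}u)\,u$; this collapses the rank-one correction into $(1+v^\top H^{-1}u)\,u v^\top H^{-1}$.

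Collecting the two rank-one contributions, the product becomes
\[
	I + \Bigl(1 - \frac{1+v^\top H^{-1}u}{1+u^\top H^{-1}v}\Bigr) u v^\top H^{-1},
\]
so the correction vanishes precisely when the scalars $u^\top H^{-1}v$ and $v^\top H^{-1}u$ coincide, leaving $I$. This equality holds automatically in the setting where the proposition is invoked, since there $H$ (and hence $H^{-1}$) is symmetric, giving $u^\top H^{-1}v = v^\top H^{-1}u$; moreover in the algorithm $u$ and $v$ are parallel to $a_i$, which makes the two scalars literally identical. I would flag this symmetry explicitly, as it is what justifies the particular denominator appearing in the statement.

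The main obstacle is purely bookkeeping rather than conceptual: because matrix multiplication does not commute, I must keep the scalar quadratic forms $u^\top H^{-1}v$ and $v^\top H^{-1}u$ strictly separate from the rank-one matrix $uv^\top H^{-1}$, and pull scalars out of products only when legitimate. Once this scalar/matrix separation is handled correctly and the symmetry of $H^{-1}$ is used, the cancellation is immediate and the proof concludes.
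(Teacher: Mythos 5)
Your direct verification is correct and is the standard textbook proof of this identity; note that the paper states Proposition~\ref{pr:sherman} without any proof, so there is no argument of the authors' to compare against. The substantive point your computation surfaces is worth keeping: after collapsing the scalar $v^\top H^{-1}u$, the residual term is $\bigl(1-\tfrac{1+v^\top H^{-1}u}{1+u^\top H^{-1}v}\bigr)uv^\top H^{-1}$, which vanishes only when $u^\top H^{-1}v=v^\top H^{-1}u$. As stated, for an arbitrary invertible (possibly non-symmetric) $H$ and arbitrary $u,v$, the proposition is therefore not quite the classical Sherman--Morrison formula: the correct denominator is $1+v^\top H^{-1}u$, and that scalar (not $1+u^\top H^{-1}v$) is also the right invertibility condition for $H+uv^\top$; one can build $2\times 2$ counterexamples with triangular $H$ and $u=e_1$, $v=e_2$ where the two quadratic forms differ and the displayed formula fails. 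You are right that this is immaterial where the proposition is invoked, since $H^k$ is symmetric and $u,v$ are both proportional to $a_i$, so the two scalars coincide; but your argument only establishes the statement under that extra hypothesis, and you should either state that hypothesis or swap the denominator to $1+v^\top H^{-1}u$ rather than leave the general claim as written. The appeal to ``a right inverse of a square matrix is the inverse'' is fine in finite dimensions, so no left-inverse computation is needed.
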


	Consider for simplicity $\tau=|S^k|=1$, then $H^{k+1}=H^k+\frac{\beta_i^{k+1}-\beta_i^k}{n}a_i a_i^\top$ and by Proposition~\ref{pr:sherman}
\begin{align*}
	B^{k+1}
	= B^k - \frac{\beta_i^{k+1}-\beta_i^k}{n+(\beta_i^{k+1}-\beta_i^k) a_i^\top B^k a_i}B^k a_i a_i^\top B^k.
\end{align*}
 Thus, the complexity of each update of Algorithm~\ref{alg:newton_glm} is $\cO(d^2)$, which is much faster than solving linear system from the update of Newton method. 
\section{Efficient implementations of Algorithm~\ref{alg:cubic}}
The subproblem that we need to efficiently solve when running Algorithm~\ref{alg:cubic} can be reduced to
\begin{align}
	\min_{x \in \R^d} g^\top x + \frac{1}{2}x^\top H x + \frac{M}{6n}\sumin \|x - w_i\|^3, \label{eq:cubic_subproblem}
\end{align}
where $g\in \R^d, H\in\R^{d\times d}$ and $w_1,\dotsc, w_n\in\R^d$ are some given vectors. One way to solve it is to run gradient descent or a stochastic variance reduced algorithm. For instance, if $f_i(x) = \phi(a_i^\top x)+ \frac{\lambda}{2}\|x\|^2$, then $H$ is the sum of $n$ functions, whose individual gradients can be computed in $\cO(d)$, while the gradient of $\frac{1}{2}x^\top H x$ requires $\cO(d^2)$ flops.

A special care is required for the sum of cubic terms. Despite its simplicity, every cubic function $\|x - w_i\|^3$ is not globally smooth, so we can not rely on gradients to minimize it. In addition, it might become the bottleneck of the solver since processing it requires accessing $n$ vectors $w_1,\dotsc, w_n$, making solver's iterations too expensive.  We propose two ways of resolving this issue. 

First, one can use a stochastic method that is based on proximal operator. For instance, the method from~\cite{mishchenko2019stochastic} can combine gradient descent or SVRG~\cite{SVRG} with variance-reduced stochastic proximal point. Note that for a single cubic term, the proximal operator of $\|x-w\|^3$ can be computed in closed-form.

Second, one can slightly change the assumption in equation~\eqref{eq:taylor_inaccuracy_values} to use a different norm, $\|\cdot\|_3$ instead of $\|\cdot\|_2$. Due to norm equivalence, this does not actually change our assumptions, although it might affect the value of $M$. The advantage of using the new norm is that it allows us to simplify the sum of cubic terms as given below,
\begin{align*}
	\avein \|x - w_i\|_3^3
	= \|x\|_3^3 - 3\<x^2, \avein w_i> + 3\<x, \avein w_i^2> - \avein \|w_i\|_3^3,
\end{align*}
where by $x^2$ and $w_i^2$ we denoted the vectors whose entries are coordinate-wise squares of $x$ and $w_i$ correspondingly. Thus, to compute gradients and functional values of the cubic penalty we only need to access $\avein w_i$ and $\avein w_i^2$ rather than every $w_i$ individually. Furthermore, one can also compute proximal operator of this function in $\cO(d)$ flops, making it possible to run proximal gradient descent or its stochastic variant on the objective in~\eqref{eq:cubic_subproblem}.

\clearpage
\section{Experiments}

We run our experiments on $\ell_2$-regularized logistic regression problem with two datasets from the LIBSVM library: madelon and a8a. The objective is given by $f_i(x)=\log(1+\exp(-b_i a_i^\top x))+\frac{\lambda}{2}\|x\|^2$, where $a_1,\dotsc, a_n\in\R^d, b_1,\dotsc, b_n\in\R$ are given and  the value of $\lambda$ is either of $0, \frac{1}{10000n}$ and $\frac{1}{100n}$. 

We compared to  the standard (deterministic) Newton and incremental Newton from~\cite{rodomanov2016superlinearly}, all initialized at $x^0=0\in\R^d$. See Figure~\ref{fig:logistic_newton} for the results. Interestingly, we see that the incremental variant outperforms the full-batch Newton only by a slight margin and the stochastic method is even worse. All methods, however, solve the problem very fast even when it is ill-conditioned.

\begin{figure*}[h]
\begin{subfigure}[t]{0.32\textwidth}
	\includegraphics[width=1\textwidth]{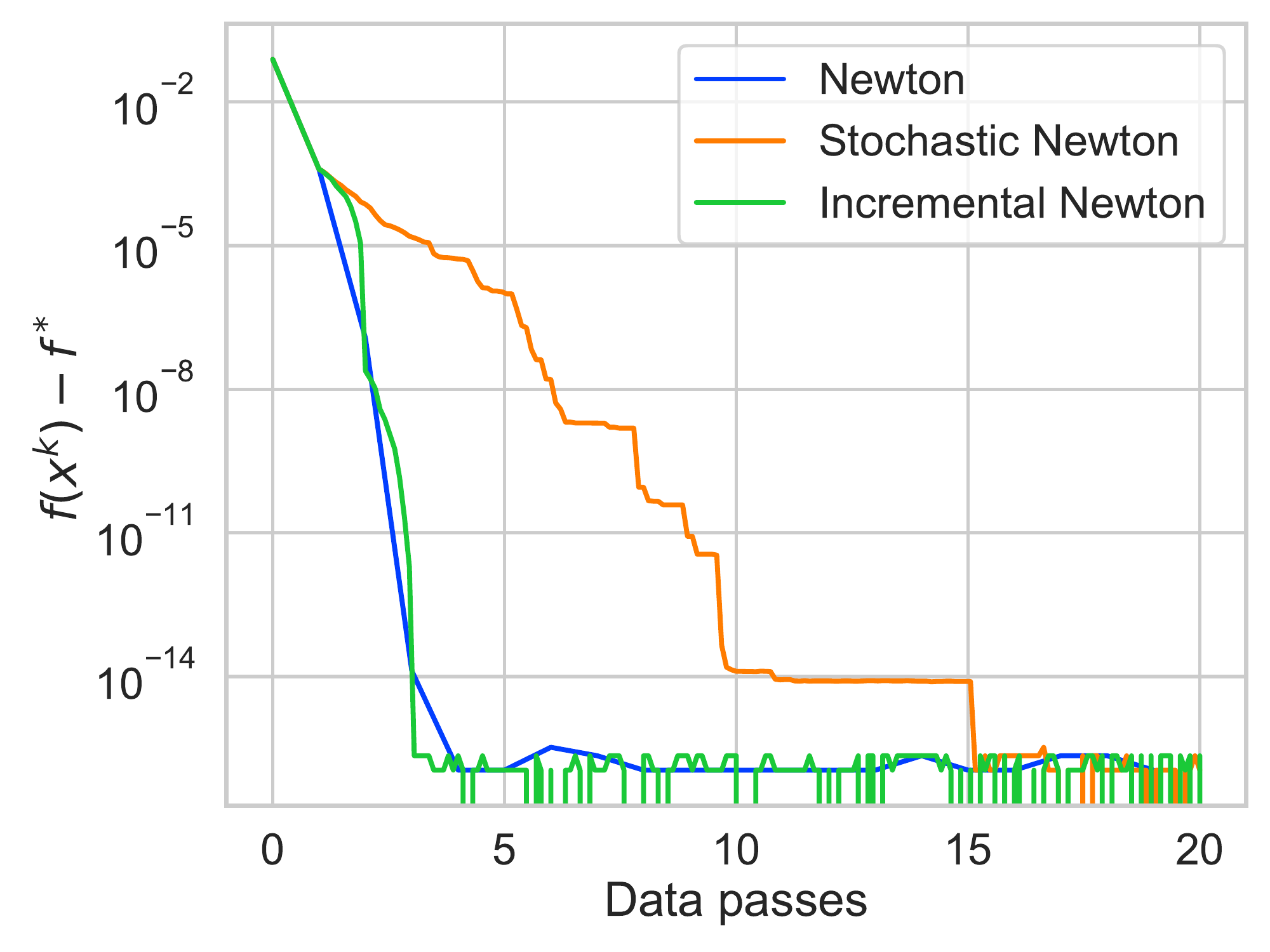}
	\caption{Madelon, $\lambda=\frac{1}{100n}$}
\end{subfigure}
\begin{subfigure}[t]{0.32\textwidth}
	\includegraphics[width=1\textwidth]{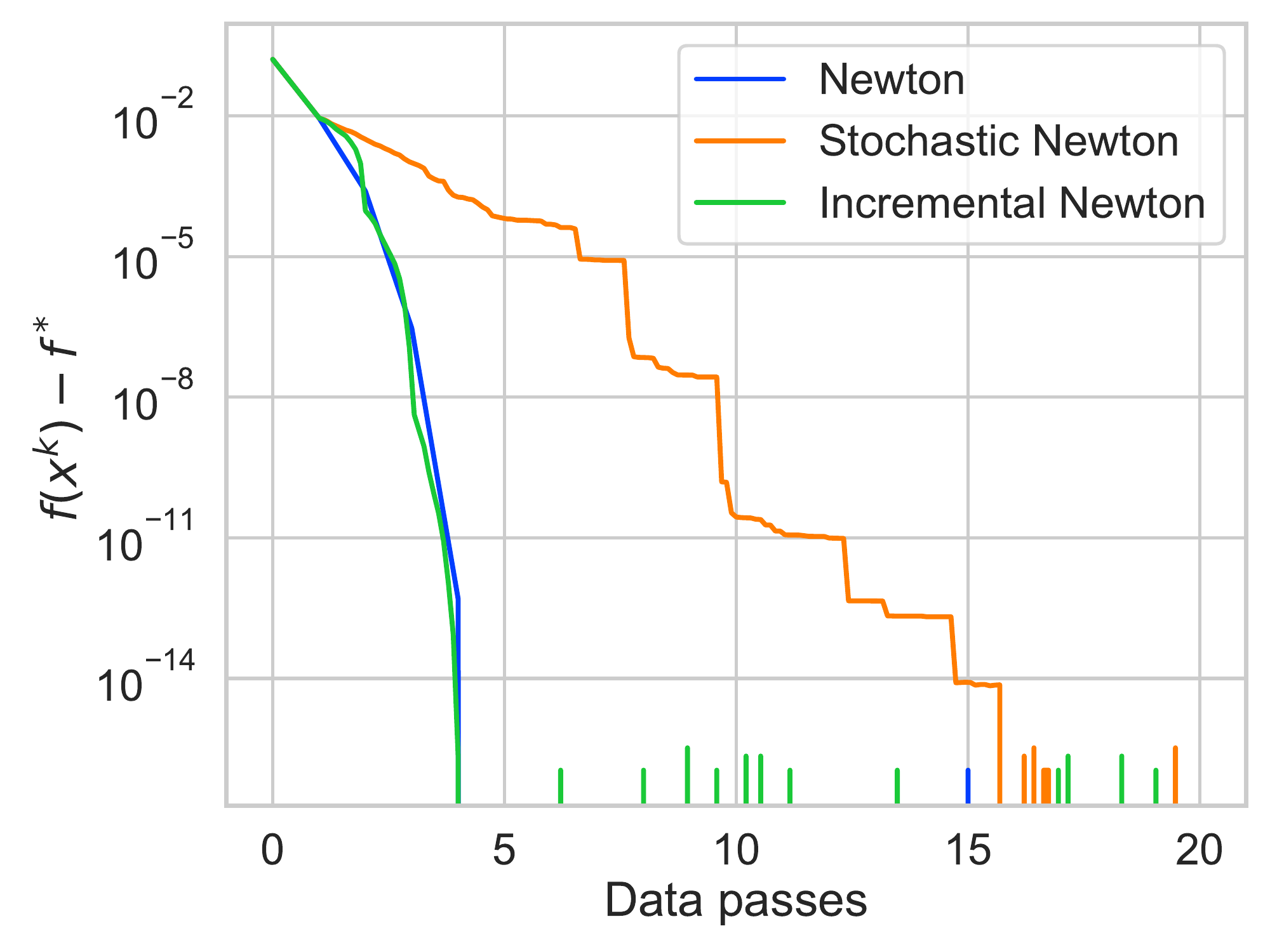}
	\caption{Madelon, $\lambda=\frac{1}{10000n}$}
\end{subfigure}
\begin{subfigure}[t]{0.32\textwidth}
	\includegraphics[width=1\textwidth]{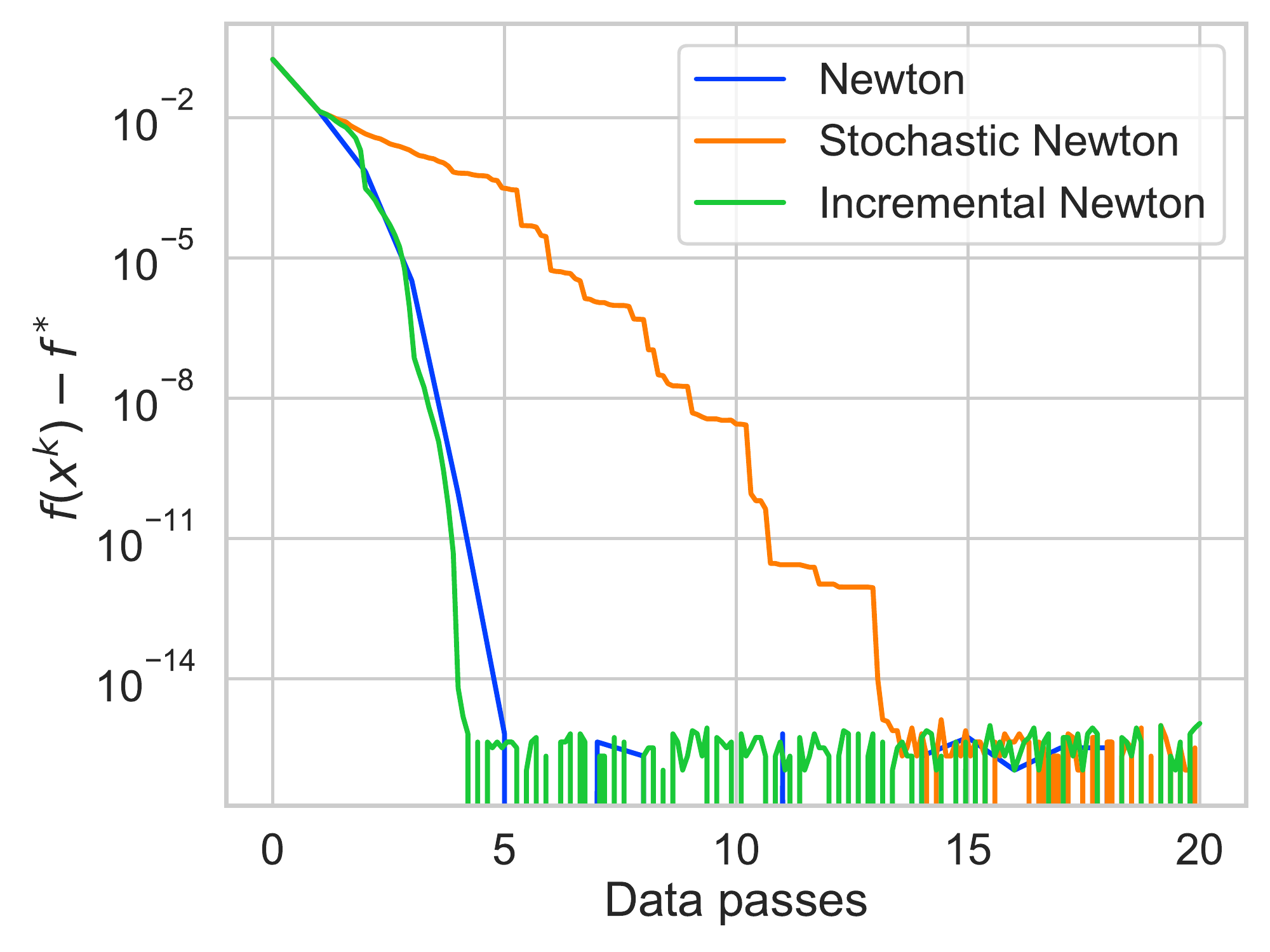}
	\caption{Madelon, $\lambda=0$}
\end{subfigure}\\
\begin{subfigure}[t]{0.32\textwidth}
	\includegraphics[width=1\textwidth]{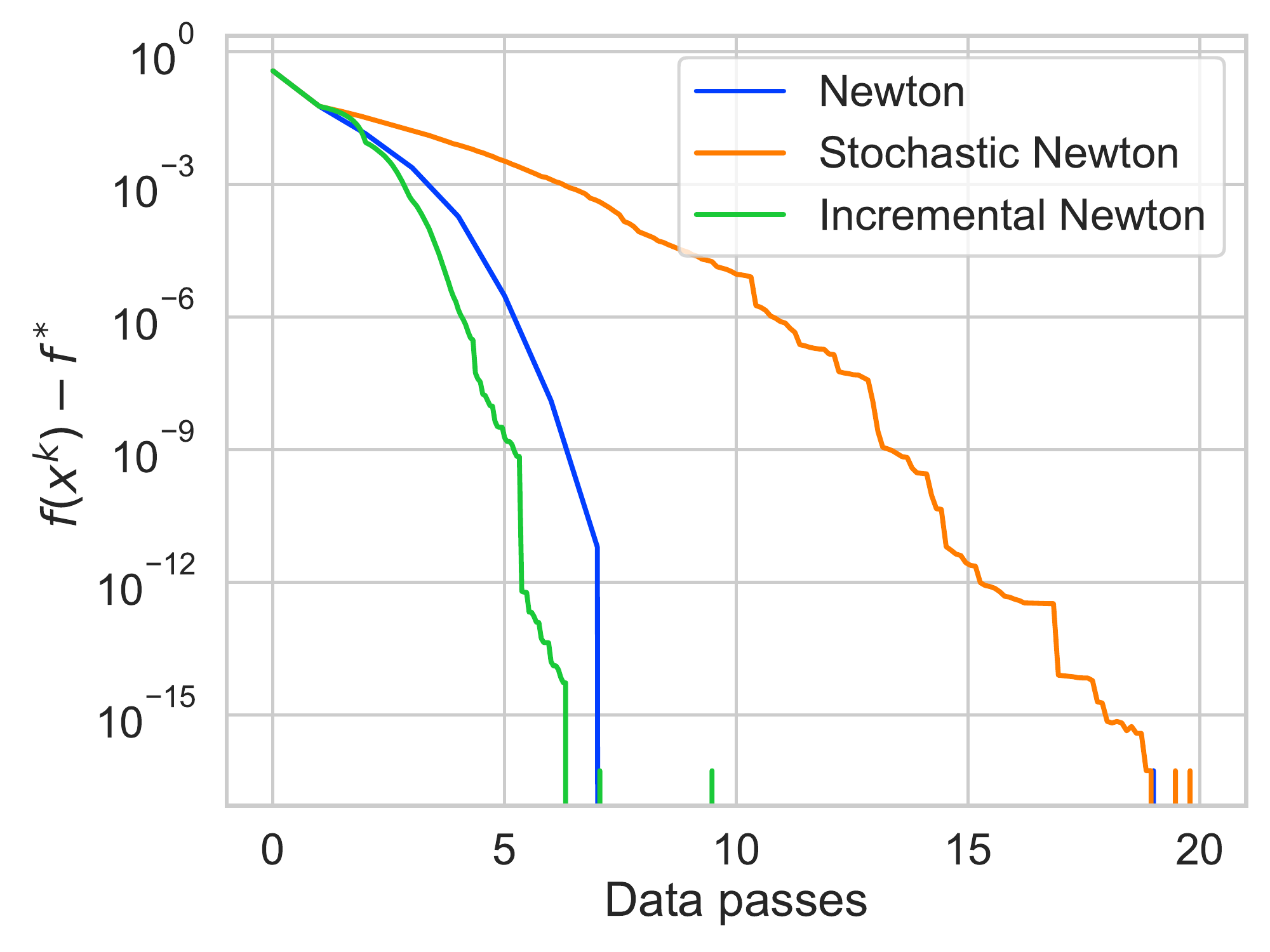}
	\caption{a8a, $\lambda=\frac{1}{100n}$}
\end{subfigure}
\begin{subfigure}[t]{0.32\textwidth}
	\includegraphics[width=1\textwidth]{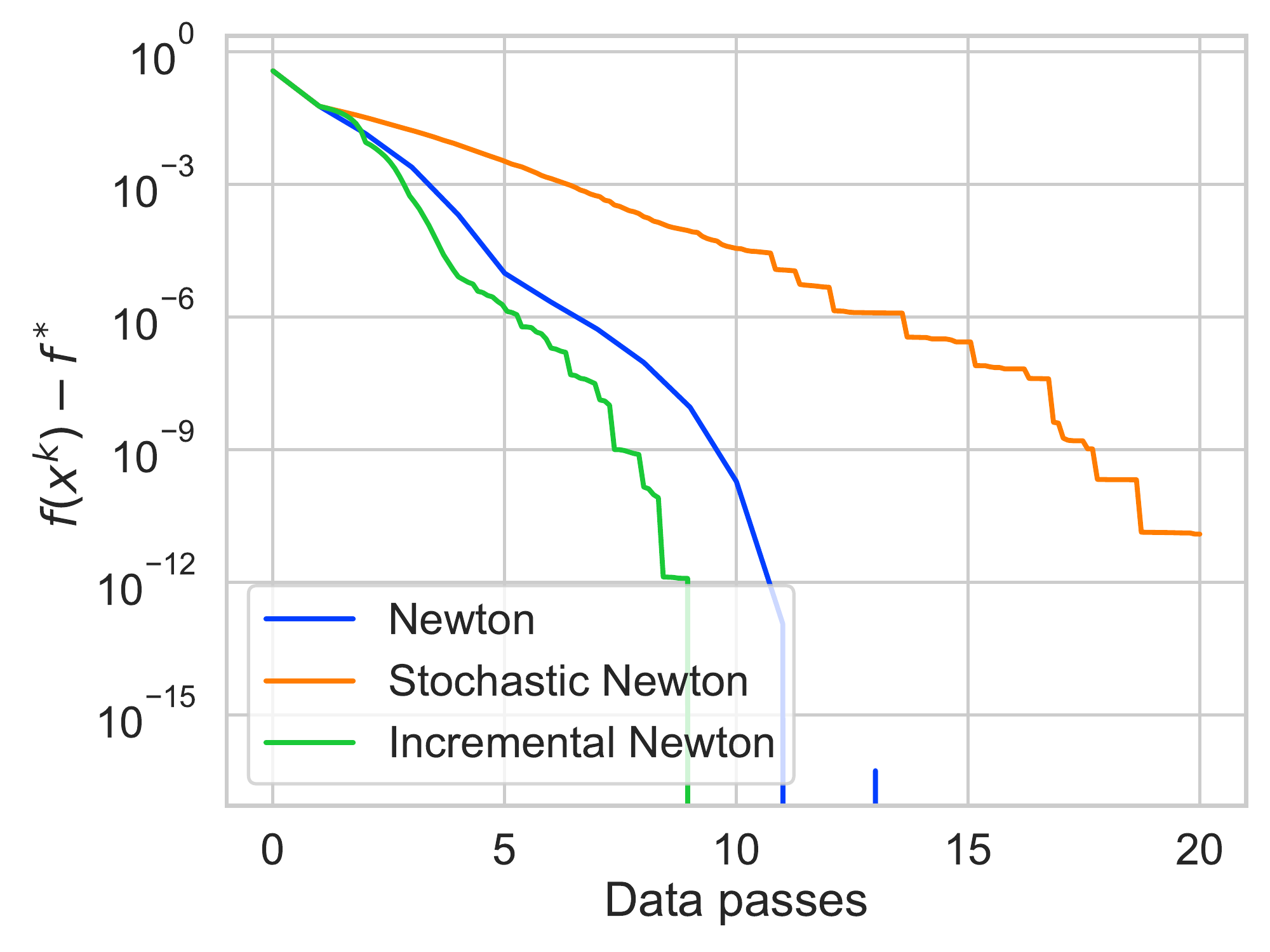}
	\caption{a8a, $\lambda=\frac{1}{10000n}$}
\end{subfigure}
\begin{subfigure}[t]{0.32\textwidth}
	\includegraphics[width=1\textwidth]{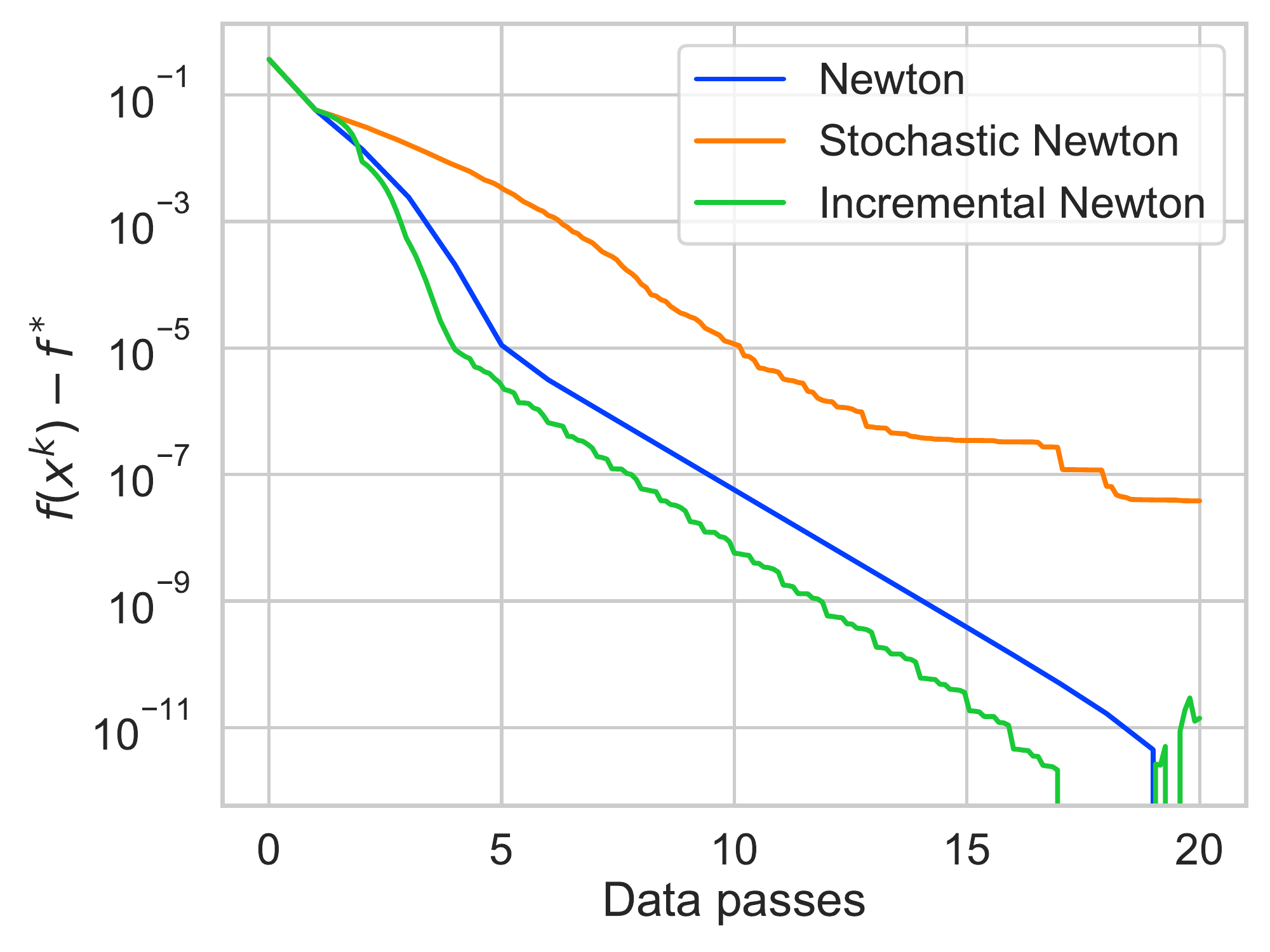}
	\caption{a8a, $\lambda=0$}
\end{subfigure}
\caption{Comparison of Newton methods when applied to logistic regression with $\ell_2$-regularization ($x$-axis is the number of full passes over the dataset and $y$-axis is the functional suboptimality).}
\label{fig:logistic_newton}
\end{figure*}

We also did a comparison of the full batch cubic Newton with our stochastic version. Since with initialization $x^0=0\in\R^d$ Newton algorithm itself is convergent, we chose an initialization much further from the optimum, namely $x^0=(0.5, \dotsc, 0.5)^\top$. With this choice of $x^0$, the value $M=0$ leads to divergence. Therefore, we separately tuned the optimal value of $M$ for both algorithms, deterministic and stochastic, and discovered that Algorithm~\ref{alg:cubic} converges under much smaller values of $M$ and its optimal value was typically close to the smallest under which it converges. This is similar to the behaviour of first-order approximations~\cite{qian2019miso}, where the quadratic penalty works under $n$ times slower coefficient.

\begin{figure*}[h]
\centering
\begin{subfigure}[t]{0.32\textwidth}
	\includegraphics[width=1\textwidth]{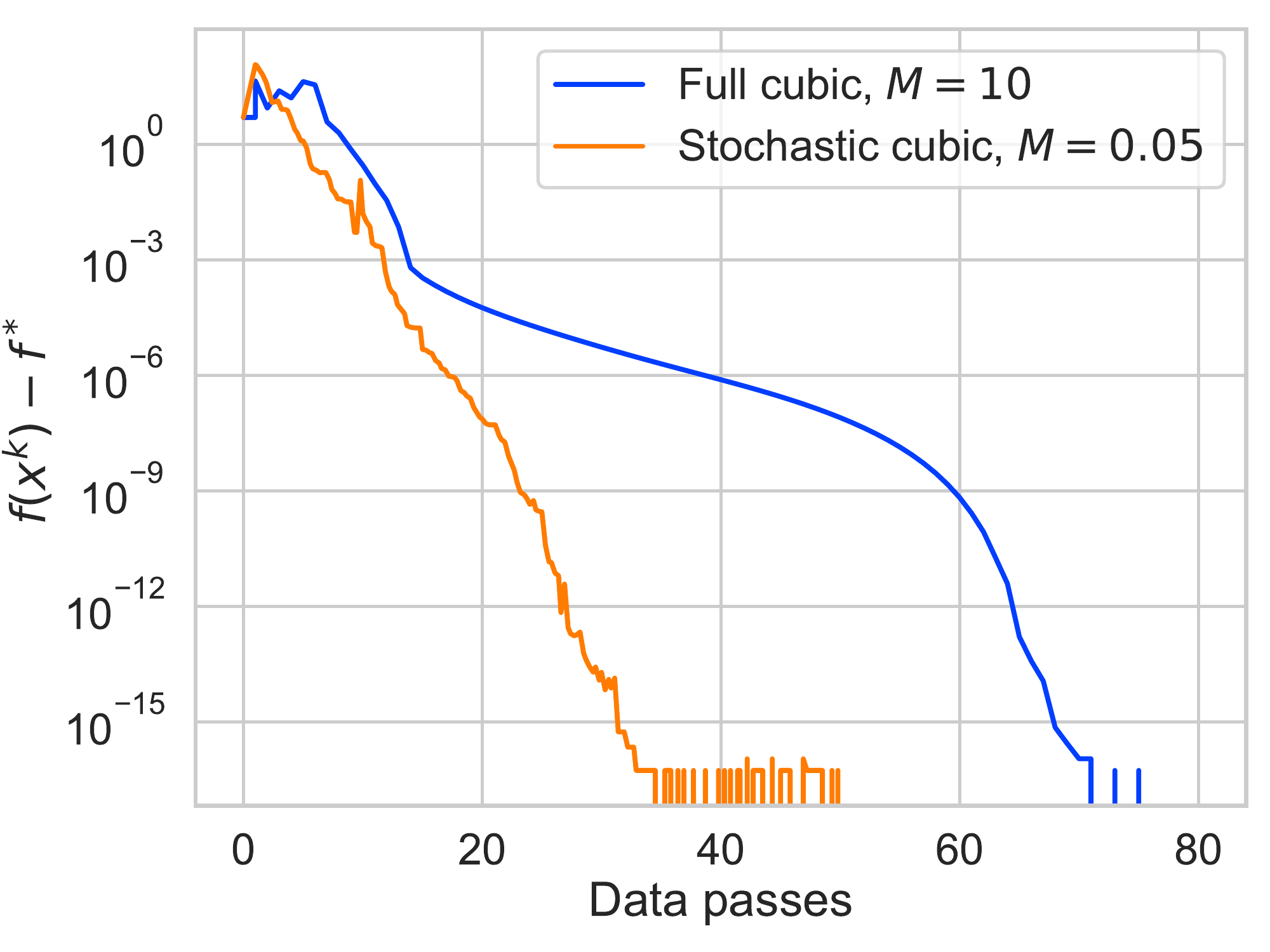}
	\caption{a8a, $\lambda=\frac{1}{100n}$}
\end{subfigure}
\begin{subfigure}[t]{0.32\textwidth}
	\includegraphics[width=1\textwidth]{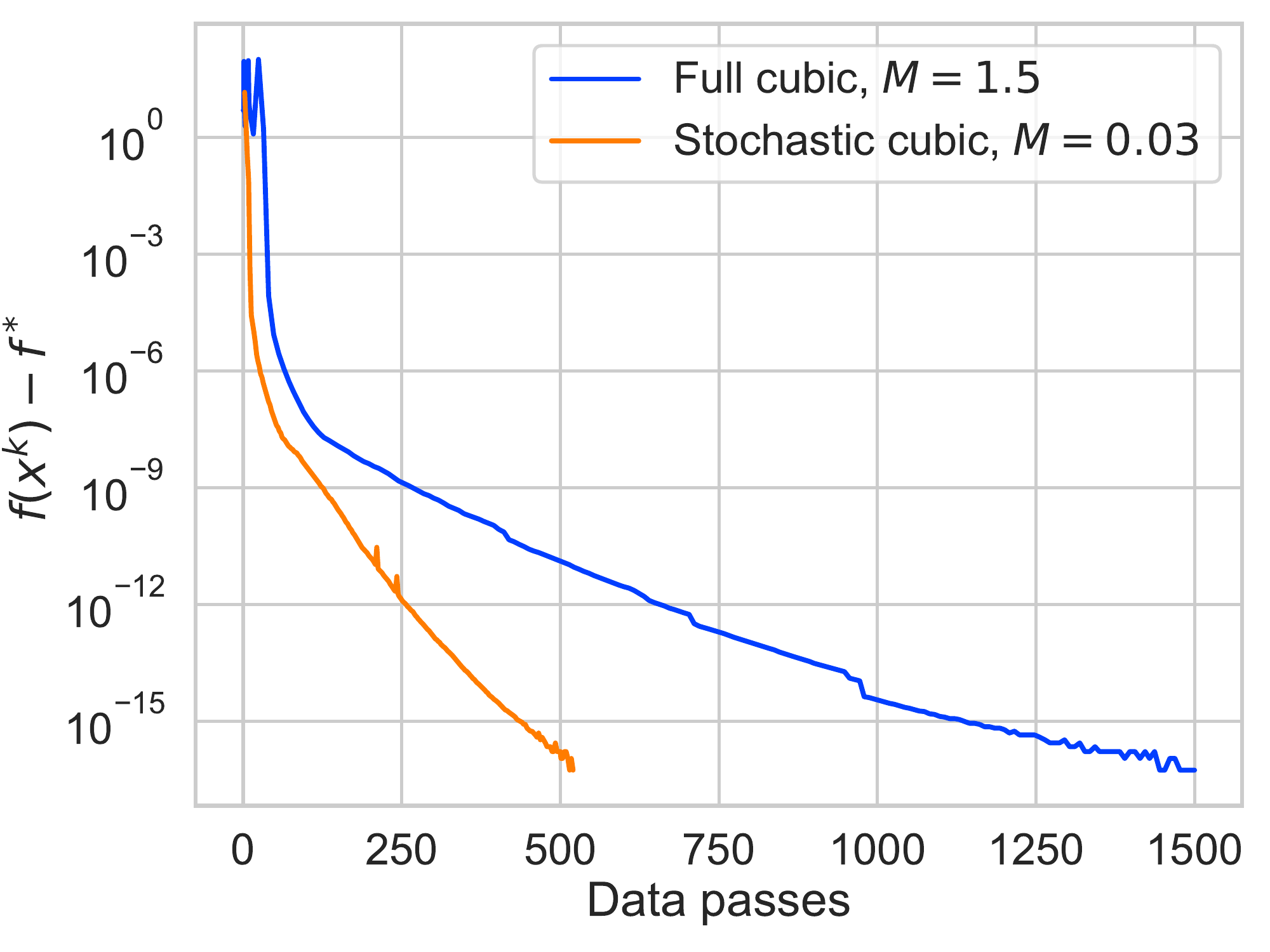}
	\caption{a8a, $\lambda=\frac{1}{10000n}$}
\end{subfigure}
\caption{Comparison of Newton methods with cubic regularization when applied to logistic regression with $\ell_2$-regularization ($x$-axis is the number of full passes over the dataset and $y$-axis is the functional suboptimality). The stochastic algorithm uses $n=10$.}
\label{fig:logistic_newton_cubic}
\end{figure*}

To solve the cubic subproblem~\eqref{eq:cubic_subproblem}, we used the algorithm from~\cite{mishchenko2019stochastic}. To get a finite-sum problem, we partitioned the a8a dataset into $n=10$ parts of approximately the same size. The results are presented in Figure~\ref{fig:logistic_newton_cubic}.

\end{document}